\def\colorful{0}
\newcommand{\violet}[1]{{\color{violet}{#1}}}
\newcommand{\blue}[1]{{{\color{blue}#1}}}
\newcommand{\violet}[1]{{{#1}}}
\newcommand{\blue}[1]{{{#1}}}
\newcommand{\Gimpurity}{\mathscr{G}\text{-}\mathrm{impurity}}
\newcommand{\pmo}{\{\pm 1\}}
\newcommand{\puritygainS}{\mathrm{PurityGain}_{\mathscr{G},S}}
\newcommand{\puritygainf}{\mathrm{PurityGain}_{\mathscr{G},f}}
\newcommand{\localgainS}{\mathrm{LocalGain}_{\mathscr{G},S}}
\newcommand{\localgainf}{\mathrm{LocalGain}_{\mathscr{G},f}}
\newcommand{\estpuritygainf}{\mathrm{EstPurityGain}_{\mathscr{G},f}}
\newcommand{\estlocalgainf}{\mathrm{EstLocalGain}_{\mathscr{G},f}}
\newcommand{\estpuritygainS}{\mathrm{EstPurityGain}_{\mathscr{G},S}}
\newcommand{\estlocalgainS}{\mathrm{EstLocalGain}_{\mathscr{G},S}}
\newcommand{\Batch}{\mathrm{Batch}}
\newcommand{\LocalLearner}{\textsc{LocalLearner}}
\newcommand{\TopDownSizeEstimate}{\textsc{TopDownSizeEstimate}}
\newtheorem*{rep@theorem}{\rep@title}
\newcommand{\newreptheorem}[2]{
\newenvironment{rep#1}[1]{
 \def\rep@title{#2 \ref{##1}}
 \begin{rep@theorem}\itshape}
 {\end{rep@theorem}}}
\theoremstyle{plain}
\newtheorem*{rep@claim}{\rep@title}
\newcommand{\newrepclaim}[2]{
\newenvironment{rep#1}[1]{
 \def\rep@title{#2 \ref{##1}}
 \begin{rep@claim}\itshape}
 {\end{rep@claim}}}
\theoremstyle{plain}
\begin{document}

\newcommand{\error}{\mathrm{error}}
\newcommand{\Est}{\textsc{Est}}
\renewcommand{\bn}{\pmo^n}
\renewcommand{\bits}{\pmo}
\newcommand{\TopDown}{\textsc{TopDown}}
\newcommand{\Score}{\mathrm{Score}}

\newcommand{\MiniBatchTopDown}{\textsc{MiniBatchTopDown}}
\newcommand{\MiniBatchTopDownCapped}{\textsc{MiniBatchTopDownCapped}}

\title{Estimating decision tree learnability \\ with polylogarithmic sample complexity\vspace{15pt}}

\author{%
  Guy Blanc \vspace{5pt} \\
  \hspace{-5pt}{\sl Stanford} 
      \and
  Neha Gupta \vspace{5pt} \\
  \hspace{-5pt}{\sl Stanford} 
  \and
  Jane Lange \vspace{5pt} \\
  \hspace{-5pt}{\sl MIT} 
  \and
    Li-Yang Tan \vspace{5pt} \\
  \hspace{-5pt}{\sl Stanford} 
  \vspace{15pt} 
  }
\date{\small{\today}}

\maketitle

\begin{abstract}
We show that top-down decision tree learning heuristics are amenable to highly efficient {\sl learnability estimation}:  for monotone target functions, the error of the decision tree hypothesis constructed by these heuristics can be estimated with {\sl polylogarithmically} many labeled examples, exponentially smaller than the number necessary to run these heuristics, and indeed, exponentially smaller than information-theoretic minimum required to learn a good decision tree.   This adds to a small but growing list of fundamental learning algorithms that have been shown to be amenable to learnability estimation. 

En route to this result, we design and analyze sample-efficient {\sl minibatch} versions of top-down decision tree learning heuristics and show that they achieve the same provable guarantees as the full-batch versions.  We further give ``active local'' versions of these heuristics: given a test point $x^\star$, we show how the label $T(x^\star)$ of the decision tree hypothesis $T$ can be computed with polylogarithmically many labeled examples, exponentially smaller than the number necessary to learn~$T$. 
\end{abstract} 

\thispagestyle{empty}

\newpage 
\setcounter{page}{1}

\section{Introduction} 

We study the problem of {\sl estimating learnability}, recently introduced by Kong and Valiant~\cite{KV18} and Blum and Hu~\cite{BH18}.  Consider a learning algorithm $\mathcal{A}$ and a dataset~$S$ of {\sl unlabeled} examples.  Can we estimate the performance of $\mathcal{A}$ on $S$---that is, the error of the hypothesis that $\mathcal{A}$ would return if we were to label the entire dataset $S$ and train $\mathcal{A}$ on it---by labeling only very few of the examples in $S$?  Are there learning tasks and algorithms for which an accurate estimate of learnability can be obtained with far fewer labeled examples than the information-theoretic minimum required to learn a good hypothesis?  

\paragraph{Motivating applications.}  Across domains and applications, the labeling of datasets is often an expensive process, requiring either significant computational resources or a large number of person-hours.  There are therefore numerous natural scenarios in which an efficient learnability estimation procedure could serve as a useful exploratory precursor to learning.  For example, suppose the error estimate returned by this procedure is large.  This tells us that if we were to label the entire dataset~$S$ and run $\mathcal{A}$ on it, the error of the hypothesis $h$ that $\mathcal{A}$ would return is large.  With this information, we may decide that $h$ would not have been of much utility anyway, thereby saving ourselves the resources and effort to label the entire dataset $S$ (and to run $\mathcal{A}$).  Alternatively, we may decide to collect more data or to enlarge the feature space of $S$, in hopes of improving the performance of~$\mathcal{A}$.  The learnability estimation procedure could again serve as a guide in this process, telling us {\sl how much} the performance of $\mathcal{A}$ would improve with these decisions.  Relatedly, such a procedure could be useful for hyperparameter tuning, where the learning algorithm $\mathcal{A}$ takes as input a parameter~$\rho$, and its performance improves with $\rho$, but its time and sample complexity also increases with $\rho$.  The learnability estimation procedure enables us to efficiently determine the best choice of $\rho$ for our application at hand, and run $\mathcal{A}$ just a single time with this value of $\rho$.  As a final example, such a procedure could also be useful for dataset selection: given unlabeled training sets $S_1,\ldots,S_m$, and access to labeled examples from a test distribution $\mathcal{D}$, we can efficiently determine the $S_i$ for which~$\mathcal{A}$ would produce a hypothesis that achieves the smallest error with respect to $\mathcal{D}$. 

\paragraph{Prior works on estimating learnability.} While this notion is still relatively new, there are already a number of works studying it in a variety of settings, including robust linear regression~\cite{KV18}, learning unions of intervals and $k$-Nearest-Neighbor algorithms~\cite{BH18}, contextual bandits~\cite{KVB20}, learning Lipschitz functions, and  the Nadaraya–Watson estimator in kernel regression~\cite{BBG20}.

A striking conceptual message has emerged from this line of work: it is often possible to estimate learnability with far fewer labeled examples than the number required to run the corresponding algorithm, and indeed, far fewer than the information-theoretic minimum required to learn a good hypothesis.

\subsection{Top-down decision tree learning}
\label{sec:background}
 We study the problem of estimating learnability in the context of {\sl decision tree learning}.  Specifically, we focus on  {\sl top-down} decision tree learning heuristics such as ID3, C4.5, and CART.  These classic and simple heuristics continue to be widely employed in everyday machine learning applications and enjoy significant empirical success.  They are also the core subroutine in modern, state-of-the-art ensemble methods such as random forests and gradient boosted trees. 

We briefly describe how these top-down heuristics work, deferring the formal description to the main body of this paper.  Each such heuristic $\TopDown_{\mathscr{G}}$ is defined by {\sl impurity function} $\mathscr{G} : [0,1]\to [0,1]$ which determines its splitting criterion.\footnote{Impurity functions $\mathscr{G}$ are restricted to be concave, symmetric around $\frac1{2}$, and to satisfy $\mathscr{G}(0) = \mathscr{G}(1) = 0$ and $\mathscr{G}(\frac1{2}) = 1$.  For example, ID3 and C4.5 use the binary entropy function $\mathscr{G}(p) = \mathrm{H}(p)$, and the associated purity gain is commonly referred to as information gain; CART uses the Gini criterion $\mathscr{G}(p) = 4p(1-p)$; Kearns and Mansour proposed and analyzed the function $\mathscr{G}(p) = 2\sqrt{p(1-p)}$~\cite{KM99}. The work of Dietterich, Kearns, and Mansour~\cite{DKM96} provides a detailed discussion and experimental comparison of various impurity functions.}  $\TopDown_{\mathscr{G}}$ takes as input a labeled dataset $S \sse \mathcal{X} \times \zo$ and a size parameter $t\in \N$, and constructs a size-$t$ decision tree for $S$ in a {\sl greedy}, {\sl top-down} fashion.  It begins by querying $\Ind[x_i \ge \theta]$ at the root of the tree, where~$x_i$ and $\theta$ are chosen to maximize the {\sl purity gain with respect to~$\mathscr{G}$}: 
\[ \mathscr{G}(\E[\by]) -  \big(\Pr[\bx_i \ge \theta] \cdot \mathscr{G}(\E[\,\by \mid \bx_i \ge \theta\,]) + \Pr[\bx_i < \theta] \cdot \mathscr{G}(\E[\,\by \mid \bx_i < \theta\,]) \big),\]
where the expectations and probabilities are with respect to $(\bx,\by) \sim S$.  
More generally, $\TopDown_{\mathscr{G}}$ grows its current tree $T$ by splitting a leaf $\ell \in T^\circ$ with a query to $\Ind[x_i\ge \theta]$, where $\ell$, $x_i$, and $\theta$ are chosen to maximize: 
\[ 
 \mathrm{PurityGain}_{\mathscr{G},S}(\ell,i,\theta) \coloneqq \Pr[\text{$\bx$ reaches $\ell$}] \cdot \mathrm{LocalGain}_{\mathscr{G},S}(\ell,i,\theta), \] 
 where
\begin{align*}
  \mathrm{LocalGain}_{\mathscr{G},S}(\ell,i,\theta) &\coloneqq \mathscr{G}(\E[\,\by \mid \text{$\bx$ reaches $\ell$}\,])  \\ 
  &\quad -   \big(\Pr[\bx_i\ge \theta]\cdot \mathscr{G}(\E[\,\by\mid \text{$\bx$ reaches $\ell$, $\bx_i \ge \theta$}\,]) \\
  &\quad\ \,\,  + \Pr[\bx_i < \theta] \cdot  \mathscr{G}(\E[\,\by\mid \text{$\bx$ reaches $\ell$, $\bx_i < \theta$}\,])\big).
\end{align*}

\paragraph{Provable guarantees for monotone target functions.}  Motivated by the popularity and empirical success of these top-down heuristics, there has been significant interest and efforts in establishing provable guarantees on their performance~\cite{Kea96,DKM96,KM99,FP04,Lee09,BDM19,BDM19-smooth,BLT20-ITCS,BLT3}.  The starting point of our work is a recent result of Blanc, Lange, and Tan~\cite{BLT3}, which provides a guarantee on their performance when run on {\sl monotone} target functions, with respect to the uniform distribution:
\bigskip

\noindent {\bf Theorem 2 of~\cite{BLT3}.} {\it Let $f : \bits^d \to \zo$ be a monotone target function and $\mathscr{G}$ be any impurity function.  For $s\in \N$ and $\eps,\delta \in (0,\frac1{2})$, let $t = s^{\Theta(\log s)/\eps^2}$ and $\bS$ be a set of $n$ labeled training examples $(\bx,f(\bx))$ where $\bx \sim \bits^d$ is uniform random, and $n = \tilde{O}(t)\cdot \poly(\log d,\log(1/\delta)).$  With probability at least $1-\delta$ over the randomness of $\bS$, the size-$t$ decision tree hypothesis constructed by $\TopDown_{\mathscr{G}}(t,\bS)$ satisfies $\error_f(T) \coloneqq \Pr_{\bx\sim\bits^d}[T(\bx)\ne f(\bx)] \le \opt_s + \eps$, where $\opt_s$ denotes the error of the best size-$s$ decision tree for $f$.
} 
\bigskip  


We refer the reader to the introduction of~\cite{BLT3} for a discussion of why assumptions on the target function are necessary in order to establish provable guarantees. Briefly, as had been noted by Kearns~\cite{Kea96}, there are examples of simple non-monotone target functions $f: \bits^d \to \zo$, computable by decision trees of constant size, for which any impurity-based heuristic may build a complete tree of size $\Omega(2^d)$ before achieving any non-trivial accuracy. Monotonicity is a natural way of excluding these adversarial functions, and for this reason it is one of the most common assumptions in learning theory. Results for monotone functions tend to be good proxies for the performance of learning algorithms on real-world datasets, which also do not exhibit these adversarial structures.

\paragraph{Our contributions.}  We give strengthened provable guarantees on the performance of top-down decision tree learning heuristics, focusing on sample complexity.  Our three main contributions are as follows:

\begin{enumerate}[leftmargin = 15pt]
\item {\sl Minibatch top-down decision tree learning.} We introduce and analyze $\textsc{MiniBatchTopDown}_\mathscr{G}$, a {\sl minibatch} version of $\TopDown_{\mathscr{G}}$ where the purity gain associated with each split is estimated with only polylogarithmically many samples within the dataset $S$ rather than all of $S$.  For all impurity functions $\mathscr{G}$, we show that $\textsc{MiniBatchTopDown}_\mathscr{G}$  achieves the same provable guarantees that those that~\cite{BLT3} had established for the full-batch version $\TopDown_\mathscr{G}$.

\item {\sl Active local learning.}  We then study $\textsc{MiniBatchTopDown}_{\mathscr{G}}$ within the recently-introduced {\sl active local learning} framework of Backurs, Blum, and Gupta~\cite{BBG20}, and show that it admits an efficient active local learner.  Given active access to an {\sl unlabeled} dataset $S$ and a test point~$x^\star$, we show how $T(x^\star)$ can be computed by labeling only polylogarithmically many of the examples in $S$, where $T$ is the decision tree hypothesis that $\textsc{MiniBatchTopDown}_{\mathscr{G}}$ would construct if we were to label {\sl all} of $S$ and train $\textsc{MiniBatchTopDown}_{\mathscr{G}}$ on it. 

\item {\sl Estimating learnability.} Building on  our results above, we show that $\textsc{MiniBatchTopDown}_{\mathscr{G}}$ is amendable to highly-efficient learnability estimation.  Given active access to an unlabeled dataset $S$, we show that the error of $T$ with respect to any test distribution can be approximated by labeling only polylogarithmically many of the examples in~$S$, where $T$ is the decision tree hypothesis that $\textsc{MiniBatchTopDown}_{\mathscr{G}}$ would construct if we were to label all of $S$ and train $\textsc{MiniBatchTopDown}_{\mathscr{G}}$ on it. 
\end{enumerate} 

\subsection{Formal statements of our results}

\paragraph{Feature space and distributional assumptions.}  We work in the setting of binary attributes and binary classification, i.e.~we focus on the task of learning a target function $f : \bits^d \to \zo$.  We will assume the learning algorithm receives uniform random examples $\bx \sim \bits^d$, either labeled or unlabeled.  The error of a decision tree hypothesis $T : \bits^d \to \zo$ with respect to $f$ is defined to be $\error_f(T) \coloneqq \Pr[f(\bx) \ne T(\bx)]$ where $\bx\sim \bits^d$ is uniform random.  We write $\opt_s(f)$ to denote $\min\{ \error_f(T) \colon \text{$T$ is a size-$s$ decision tree} \}$; when $f$ is clear from context we simply write $\opt_s$.  We will also be interested in the error of $T$ with respect to general test sets $(\Pr_{(\bx,\by)\sim S_{\mathrm{test}}}[T(\bx)\ne \by]$) and general test distributions $(\Pr_{(\bx,\by)\sim \mathcal{D}_{\mathrm{test}}}[T(\bx)\ne \by])$.

\paragraph{Notation and terminology.} For any decision tree $T$, we say the size of $T$ is the number of leaves in~$T$. We refer to a decision tree with unlabeled leaves as a {\sl partial tree}, and write $T^\circ$ to denote such trees.  
For a leaf $\ell$ of a partial tree $T^\circ$, we write $|\ell|$ to denote its depth within~$T^\circ$, the number of attributes queried along the path that leads to $\ell$.  We say that an input $x\in \bits^d$ is {\sl consistent with a leaf $\ell$} if $x$ reaches $\ell$ within $T^\circ$, and we write $\ell_{T^\circ}(x)$ to denote the (unique) leaf $\ell$ of $T^\circ$ that $x$ is consistent with. 
A function $f: \bits^d \to \{0,1\}$ is said to be {\sl monotone} if for every coordinate $i \in [d]$, it is either non-decreasing with respect to $i$ (i.e.~$f(x) \leq f(y)$ for all $x,y \in \bits^d$ such that $x_i \leq y_i$) or non-increasing with respect to $i$ (i.e.~$f(x) \geq f(y)$ for all $x,y\in\bits^d$ such that $x_i \leq y_i$).  We use {\bf boldface} to denote random variables (e.g.~$\bx\sim\bits^d$), and unless otherwise stated, all probabilities and expectations are with respect to the uniform distribution.  For $p\in [0,1]$, we write $\mathrm{round}(p)$ to denote $\Ind[p\ge \frac1{2}]$.  We reserve $S$ to denote a labeled dataset and $S^\circ$ to denote an unlabeled dataset.
\bigskip

We are now ready to describe our algorithms and state our main results.

\begin{definition}[Minibatch]
Let $S$ be a labeled dataset.  A \emph{minibatch from $S$}, denoted $\bB \sim \Batch_b(S)$, is a set of $b$ uniform random points $(x,y)$ chosen without replacement from $S$.  More generally, for a leaf $\ell$, a \emph{minibatch consistent with $\ell$ from $S$}, denoted $\bB\sim \Batch_b(S, \ell)$, is a set of $b$ uniformly random pairs chosen without replacement from among $(x,y) \in S$ such that $x$ is consistent with $\ell$.  (In both cases, if there are fewer than $b$ such points, we return all of them.) Minibatches from unlabeled datasets $S^\circ$ are defined analogously. \end{definition}

\begin{definition}[Minibatch completion of partial trees]Given a partial tree $T^\circ$, we write $T^{\circ}_{\Batch_b(S)}$ to denote the tree obtained by labeling each leaf $\ell \in T^\circ$ with $\mathrm{round}(\E_{(\bx,f(\bx))\sim \bB}[f(\bx)])$ where $\bB \sim \Batch_b(S,\ell)$. 
\end{definition} 
\begin{figure}[H]
  \captionsetup{width=.9\linewidth}
\begin{tcolorbox}[colback = white,arc=1mm, boxrule=0.25mm,left=20pt]
\vspace{3pt} 

\hspace{-10pt}$\MiniBatchTopDown_{\mathscr{G}}$($t,b,S$):  \vspace{6pt}

 Initialize $T^\circ$ to be the empty tree. \vspace{4pt} 
 
  Define $D \coloneqq \log t + \log\log t$.\vspace{4pt}

 while ($\size(T^\circ) < t$) \{  \\

\vspace{-10pt}
\begin{enumerate}[leftmargin=20pt]
\item \hspace{-5pt} {\sl Score:}  For each leaf $\ell \in T^\circ$ of depth at most $D$,  \violet{draw $\bB \sim \Batch_b(S,\ell)$.}  For each coordinate $i\in [d]$,  compute:
\begin{align*}  
\violet{\mathrm{PurityGain}_{\mathscr{G},\bB}}(\ell,i) &\coloneqq 2^{-|\ell|} \cdot \mathrm{LocalGain}_{\mathscr{G},\bB} (\ell,i), \text{ where}\\
  \violet{\mathrm{LocalGain}_{\mathscr{G},\bB}}(\ell,i) &\coloneqq \mathscr{G}(\E[\violet{f(\bx)}])  \\ 
  &\quad -   \big(\lfrac1{2} \, \mathscr{G}(\E[\,\violet{f(\bx)}\mid \text{$\bx_i =-1$}\,]) + \lfrac1{2} \, \mathscr{G}(\E[\,\violet{f(\bx)}\mid \text{$\bx_i = 1$}\,])\big),
\end{align*}
where the expectations are with respect to \violet{$(\bx,\violet{f(\bx)})\sim \bB$}.
\item {\sl Split:} Let $(\ell^\star,i^\star)$ be the tuple that maximizes $\violet{\mathrm{PurityGain}_{\mathscr{G},\bB}}(\ell,i)$.  Grow $T^\circ$ by splitting $\ell^\star$ with a query to $x_{i^\star}$.\\
\end{enumerate}
\vspace{-18pt}

\ \ \}  \vspace{4pt}

Output $T^\circ_{\Batch_b(S)}$.
\vspace{3pt}
\end{tcolorbox}
\label{fig:MiniBatchTopDown}
\caption{$\MiniBatchTopDown_{\mathscr{G}}$ takes as input   a size parameter $t$, a minibatch size $b$, and a labeled dataset $S$.  It outputs a size-$t$ decision tree hypothesis for $f$.}
\end{figure}

\blue{$\MiniBatchTopDown_{\mathscr{G}}$ is a minibatch version of $\TopDown_{\mathscr{G}}$, which we described informally in~\Cref{sec:background} and include its full pseudocode in~\Cref{sec:upper bound mini batch overview}.
$\MiniBatchTopDown_{\mathscr{G}}$ is more efficient than $\TopDown_{\mathscr{G}}$ in two respects:  first, purity gains and completions are computed with respect to a minibatch $\bB$ of size $b$ instead of all the entire dataset $S$; second, $\MiniBatchTopDown_{\mathscr{G}}$ never splits a leaf of depth greater than $D$, and hence constructs a decision tree of small size {\sl and} small depth, rather than just small size.  (Looking ahead, both optimizations will be crucial for the design of our sample-efficient active local learning and learnability estimation procedures.)  

Our first result shows that $\MiniBatchTopDown_{\mathscr{G}}$ achieves the same performance guarantees as those that~\cite{BLT3} had established for the full-batch version $\TopDown_{\mathscr{G}}$: }

\begin{theorem}[Provable guarantees for \MiniBatchTopDown; informal version]
    \label{thm:upper bound mini batch}
    Let $f: \bits^d \to \zo$ be a monotone target function and fix an impurity function $\mathscr{G}$. For any $s \in \mathbb{N}$, $\eps, \delta \in (0,\frac{1}{2})$, let $t = s^{\Theta(\log s)/\eps^2}$, and $\bS$ be a set of $n$ labeled training examples $(\bx, f(\bx))$ where $\bx \sim \bits^d$ is uniform random, and  
    \[ n = \tilde{O}(t) \cdot \poly(\log d, \log(1 / \delta)).\]    If the minibatch size is at least
    \[ b = \polylog(t) \cdot \poly( \log d,\log(1/\delta)),\] 
  then with probability at least $1 - \delta$ over the randomness of $\bS$ and the draws of minibatches from within~$\bS$, the size-$t$ decision tree hypothesis constructed by $\MiniBatchTopDown_{\mathscr{G}}(t, b, \bS)$ satisfies $\error_f(T) \le \opt_s + \eps$.
\end{theorem}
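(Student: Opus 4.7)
I plan to reduce the analysis to the full-batch result of~\cite{BLT3}, treating the two modifications---minibatch estimation of purity gains, and the depth cap $D = \log t + \log\log t$---as controlled sources of deviation. With $n = \tilde{O}(t)\cdot \poly(\log d,\log(1/\delta))$ samples from uniform, every leaf $\ell$ of depth $\le D$ has at least $n\cdot 2^{-|\ell|} \ge n/(t\log t)$ consistent samples in expectation. Choosing the hidden $\polylog$ factor in $n$ appropriately and applying a Chernoff bound together with a union bound over the $O(t\log t)$ depth-$\le D$ nodes ever encountered, with probability $\ge 1-\delta/3$ each such leaf has at least $b$ consistent samples in $\bS$. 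Conditional on this event, $\bB\sim\Batch_b(\bS,\ell)$ is distributionally equivalent to $b$ uniform samples from the subcube consistent with $\ell$ (without-replacement effects being negligible).

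\paragraph{Accuracy of the LocalGain estimates.} Each quantity $\E_{\bB}[f(\bx)\mid \bx_i=\pm 1]$ is a $b$-sample average of a $\zo$-valued random variable, so by Hoeffding it concentrates around its population value with deviation $\tau = \tilde{O}(1/\sqrt{b})$. A union bound over the $\le t$ iterations, $\le t$ extant leaves per iteration, and $d$ coordinates costs a factor of $O(t^2 d)$, which is absorbed by the $\poly(\log d, \log(1/\delta))$ factor in $b$. To pass from estimates of conditional biases to estimates of $\mathrm{LocalGain}$, I use the fact that any admissible $\mathscr{G}$ is concave with $\mathscr{G}(0)=\mathscr{G}(1)=0$, hence $O(\log(1/\tau))$-Lipschitz on $[\tau,1-\tau]$ with contribution at most $O(\tau\log(1/\tau))$ from the two $\tau$-neighborhoods of the endpoints (the entropy function being the worst case). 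Thus every purity-gain estimate used by the algorithm is simultaneously within $\tilde{O}(\tau)$ of the truth, with probability $\ge 1-\delta/3$.

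\paragraph{Depth cap and invoking~\cite{BLT3}.} The main obstacle is showing that the maximizer of purity gain at each iteration sits at a leaf of depth $\le D$, so that the cap never binds. Any split at depth $|\ell|>D$ has $\mathrm{PurityGain} = 2^{-|\ell|}\cdot\mathrm{LocalGain} < 1/(t\log t)$. The heart of the~\cite{BLT3} analysis is a good-split lemma certifying that whenever the current tree has error $>\opt_s+\eps$, there exists some $(\ell,i)$ with purity gain $\Omega(\eps/(t\cdot\polylog t))$; inspecting their proof, this good pair can be located by an averaging argument over the $\le s\le t$ leaves of the size-$s$ optimum, so $|\ell|=O(\log t)\le D$. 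It follows that $\MiniBatchTopDown_\mathscr{G}$ and the full-batch $\TopDown_\mathscr{G}$ make essentially the same split choices, up to the $\tilde{O}(\tau)$ slack from the previous paragraph, and the~\cite{BLT3} analysis is robust to such near-optimal choices via a standard slack-propagation argument. A final Hoeffding application handles the minibatch completion step: leaves whose true bias is bounded away from $\tfrac{1}{2}$ are correctly rounded by $T^\circ_{\Batch_b(\bS)}$, while leaves with bias near $\tfrac{1}{2}$ contribute negligibly regardless of how they are labeled. Combining these ingredients yields $\error_f(T)\le \opt_s+\eps$ with probability $\ge 1-\delta$. What requires the most care in a full write-up is confirming that the~\cite{BLT3} good split can be forced to lie at a shallow leaf rather than an arbitrary one, since this is what prevents the depth cap from becoming binding.
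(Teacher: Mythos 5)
There is a genuine gap at the step where you claim that ``$\MiniBatchTopDown_\mathscr{G}$ and the full-batch $\TopDown_\mathscr{G}$ make essentially the same split choices, up to the $\tilde{O}(\tau)$ slack \ldots and the \cite{BLT3} analysis is robust to such near-optimal choices via a standard slack-propagation argument.'' With $b = \polylog(t)$ your LocalGain estimates have additive error $\tau = \tilde{O}(1/\sqrt{b}) = 1/\polylog(t)$, so the purity-gain estimate at a leaf $\ell$ has additive error $2^{-|\ell|}\cdot\tau$. The good-split guarantee from \cite{BLT3} only certifies the existence of a split with true purity gain $\Omega(\kappa\eps^2/(j(\log s)^2))$, which is as small as $\poly(\eps)/t$. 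At a shallow leaf (say the root), the estimation error $\tau \approx 1/\polylog(t)$ exceeds this threshold by an exponential factor, so a split with \emph{zero} true purity gain can out-score the genuinely good split. The chosen split is therefore not near-optimal in any useful sense, the potential-function argument makes no progress on that iteration, and slack propagation fails. This is exactly the obstruction the paper flags: avoiding it with your argument would force $b \ge \poly(t)$, defeating the purpose of the theorem.

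The paper's resolution, which is the ingredient your proposal is missing, is to give up on every split being good and instead show that \emph{most} splits are good. Concretely: (i) a stronger fact from \cite{BLT3} lower-bounds the \emph{sum} over all leaves of the best true purity gains, not just the maximum; (ii) for a leaf of depth $|\ell| \ge \log(j) - 2$ the estimation error $2^{-|\ell|}\tau \le (4/j)\tau$ is small relative to the $\Omega(1/j)$-scale guarantee, so any ``medium-depth'' split chosen by the algorithm has true gain $\Omega(\kappa\eps^2/(j(\log s)^2))$; (iii) a counting argument shows at most a $1/4$ fraction of the first $k$ splits can be at depth $< \log(j)-2$; and (iv) a separate lemma shows that if the algorithm ever prefers a split deeper than $D$, the error is already at most $\opt_s + O(\eps)$. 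Summing the progress of the medium-depth splits in dyadic chunks then drives the potential to zero. Your concern about the depth cap is also resolved differently than you propose: the paper does not force the good split to be shallow (and the averaging in \cite{BLT3} is over leaves of the current tree, not of the size-$s$ optimum); it instead shows that wanting a too-deep split certifies the error is already small. Your Lipschitz-near-the-endpoints treatment of $\mathscr{G}$ is a reasonable substitute for the paper's H\"older-continuity hypothesis, but it does not repair the main gap.
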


\Cref{thm:upper bound mini batch} shows that it suffices for the minibatch size $b$ of $\MiniBatchTopDown_{\mathscr{G}}$ to depend polylogarithmically on $t$; in contrast, the full-batch version $\TopDown_{\mathscr{G}}$ uses the entire set $S$ to compute purity gains and determine its splits, and $|S|=n$ has a superlinear dependence on $t$.

Our next algorithm is an implementation of $\MiniBatchTopDown_{\mathscr{G}}$ within the {\sl active local learning} framework of Backurs, Blum, and Gupta~\cite{BBG20}; see~\Cref{fig:LocalAlgorithm}.

%

\begin{figure}[h!tb]
  \captionsetup{width=.9\linewidth}
\begin{tcolorbox}[colback = white,arc=1mm, boxrule=0.25mm,left=20pt]
\vspace{3pt} 

\hspace{-10pt}$\LocalLearner_{\mathscr{G}}$($t,b, \violet{S^\circ}, x^\star$):  \vspace{6pt}

Initialize $T^\circ$ to be the empty tree. \vspace{4pt}

Define $D \coloneqq \log t + \log\log t$. \vspace{4pt}

Initialize $e \coloneqq 1$ and let $\bB^\circ_{\mathrm{strands}}$ be $b$ uniform random points from $\bits^d$. \vspace{4pt}

 while ($e < t$) \{  \\

\vspace{-10pt}
\begin{enumerate}[leftmargin=20pt]
\item \hspace{-5pt} {\sl Score:} For each leaf $\ell \in \{\ell_{T^\circ}(x)\colon x \in \bB^\circ_{\mathrm{strands}}\cup \{x^\star\}\}$ of depth at most $D$, \violet{draw $\bB^\circ\sim \Batch_b(S^\circ,\ell),$ query $f$'s values on these points.}  For each coordinate $i\in [d]$, compute:
\begin{align*}  \violet{\mathrm{PurityGain}_{\mathscr{G},\bB^\circ}}(\ell,i) &\coloneqq 2^{-|\ell|} \cdot \mathrm{LocalGain}_{\mathscr{G},\bB^\circ} (\ell,i), \text{ where}\\
 \violet{\mathrm{LocalGain}_{\mathscr{G},\bB^\circ}}(\ell,i) &\coloneqq \mathscr{G}(\E[\violet{f(\bx)}])  \\ 
  &\quad -   \big(\lfrac1{2} \, \mathscr{G}(\E[\,\violet{f(\bx)}\mid \text{$\bx_i =-1$}\,]) + \lfrac1{2} \, \mathscr{G}(\E[\,\violet{f(\bx)}\mid \text{$\bx_i = 1$}\,])\big),
\end{align*}
where the expectations are with respect to \violet{$\bx \sim \bB^\circ$}. 
\item {\sl Split:} Let $(\ell^\star,i^\star)$ be the tuple that maximizes $\violet{\mathrm{PurityGain}_{\mathscr{G},\bB^\circ}}(\ell,i)$.  Grow $T^\circ$ by splitting $\ell^\star$ with a query to $x_{i^\star}$.
\item {\sl Estimate size:} Update our size estimate to
\begin{align*}
    \boldsymbol{e} = \Ex_{\bx \sim \violet{\bB^\circ_{\mathrm{strands}}}}\big[2^{|\ell_{T^{\circ}}(\bx)|}\big].
\end{align*}
\end{enumerate}
\vspace{-18pt}

\ \ \}  \vspace{4pt} 

\violet{Draw $\bB^\circ\sim \Batch_b(S^\circ,\ell_{T^\circ}(x^\star))$ and query $f$'s values on these points.} \vspace{4pt} \\ 
\violet{Output $\mathrm{round}(\Ex_{\bx\sim\bB^\circ}[f(\bx)])$.}

\vspace{3pt}
\end{tcolorbox}
\caption{$\LocalLearner_{\mathscr{G}}$ takes as input a size parameter $t$, a minibatch size $b$, an {\sl unlabeled} dataset $S^\circ$, and an input $x^\star$.  It selectively queries $f$'s values on a few points within $S^\circ$ and outputs $T(x^\star)$, where $T$ is a tree of size approximately $t$ that $\MiniBatchTopDown_{\mathscr{G}}$ would return if  we were to label all of $S^\circ$ and train $\MiniBatchTopDown_{\mathscr{G}}$ on it.}
\label{fig:LocalAlgorithm}
\end{figure}

\begin{theorem}[Active local version of $\MiniBatchTopDown$; informal version]
    \label{thm:local learner}
    Let $f: \bits^d \to \zo$ be a target function, $\mathscr{G}$ be an impurity function, and  $\violet{S^\circ}$ be an {\sl unlabeled} training set.     For all $t \in \mathbb{N}$, $\violet{\eta}, \delta \in (0,\frac{1}{2})$, if the minibatch size  is at least $b = \poly(\log t, \log d,1/\eta,\log(1/\delta))$,
  then with probability at least $1 - \delta$ over the randomness of $\bB^\circ_{\mathrm{strands}}$, we have that for all $x^\star \in \bits^d$, $\LocalLearner_{\mathscr{G}}(t,b,S^\circ,x^\star)$ labels 
  \[ q = O(b^2\log t) = \polylog(t) \cdot \poly(\log d,1/\eta,\log(1/\delta)) \]  points within $S^\circ$ and returns $T(x^\star)$, where $T$ is the size-$t'$ decision tree hypothesis that \[ \MiniBatchTopDown_{\mathscr{G}}(t',b,S)\]  would construct, $t' \in t(1\pm \violet{\eta})$, and $S$ is the labeled dataset obtained by labeling all of $S^\circ$ with $f$'s values.\footnote{To ensure that $\LocalLearner_\mathscr{G}$ consistently labels all $x^\star$ according to the {\sl same} tree $T$, we run all invocations of $\LocalLearner_{\mathscr{G}}$  with the same outcomes of randomness for $\bB_{\mathrm{strands}}^\circ$ and  draws of minibatches. Similarly, if one then wished to actually construct this tree $T$, they would run $\MiniBatchTopDown_{\mathscr{G}}$ with these same outcomes of randomness.\label{footnote:assumption}}
\end{theorem}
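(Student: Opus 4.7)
The proof will proceed via a coupling argument: run $\LocalLearner_{\mathscr{G}}$ and $\MiniBatchTopDown_{\mathscr{G}}$ on the same shared randomness, so that for every possible leaf $\ell$ (identified by its root-to-leaf path), both algorithms would use the same minibatch $\Batch_b(S^\circ,\ell)$. Under this coupling the purity gain $\mathrm{PurityGain}_{\mathscr{G},\bB^\circ}(\ell,i)$ becomes a deterministic function of the path to $\ell$, so one can speak of an infinite ``greedy'' tree $T^*$ (truncated at depth $D$) whose structure is fully determined. For any integer $m$, let $T^*_m$ denote $\MiniBatchTopDown_{\mathscr{G}}$'s partial tree after it has performed $m$ splits (in decreasing order of purity gain). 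The central goal is to show that $\LocalLearner$ correctly computes the label that $T^*_{t'}$ would assign to $x^\star$ for some $t'\in t(1\pm\eta)$.

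\textbf{Lazy simulation.} I would prove by induction on the number of iterations $k$ that $\LocalLearner$'s current partial tree $T^\circ$ is exactly the subtree of $T^*_{t^*_k}$ induced by the ancestors of $\{\ell_{T^*_{t^*_k}}(\bx) : \bx \in \bB^\circ_{\mathrm{strands}} \cup \{x^\star\}\}$ together with their immediate off-path siblings, for some value $t^*_k$ that is weakly increasing in $k$. The key local observation is that purity gain depends only on data consistent with the chosen leaf, and $\LocalLearner$ always splits the maximum-purity-gain leaf among the strand/$x^\star$ leaves; hence the relative order of splits on these paths agrees with $T^*$'s global order, and the only difference is that $\LocalLearner$ skips over the splits $\MiniBatchTopDown$ would make elsewhere in the tree. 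As an immediate corollary, $|\ell_{T^\circ}(\bx)| = |\ell_{T^*_{t^*_k}}(\bx)|$ for every $\bx\in\bB^\circ_{\mathrm{strands}}\cup\{x^\star\}$.

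\textbf{Size estimation and correctness.} For any partial binary tree $T^\circ$, the identity $\sum_{\ell\in T^\circ} 2^{-|\ell|}\cdot 2^{|\ell|} = |T^\circ|$ yields $\E_{\bx\sim\bits^d}[2^{|\ell_{T^\circ}(\bx)|}] = |T^\circ|$. Combining with the depth invariant above, the algorithm's estimate $\boldsymbol e$ is an unbiased estimator of $|T^*_{t^*_k}|$ computed over $b$ i.i.d.\ uniform strand points. Since $2^{|\ell|}\le 2^D = O(t\log t)$ and at termination the true mean is $\Theta(t)$, a multiplicative Chernoff bound gives $\boldsymbol e \in (1\pm\eta/2)|T^*_{t^*_k}|$ with probability $1-\delta/\poly(t)$ per state, and a union bound over the at most $O(b\log t)$ states $T^\circ$ ever visited yields the desired $1-\delta$ guarantee with $b=\poly(\log t,\log d,1/\eta,\log(1/\delta))$. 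When the loop exits, this ensures $|T^*_{t^*}|\in t(1\pm\eta)$, so setting $t' := |T^*_{t^*}|$ identifies the tree $T$ of the theorem statement. The final minibatch used to label $\ell_{T^\circ}(x^\star)$ is—by the coupling of randomness—exactly the one $\MiniBatchTopDown$ would use, so the returned label equals $T(x^\star)$. The query complexity follows by noting that each iteration scores at most $b+1$ leaves (one per strand plus $x^\star$) with $b$ labels each, and each strand's path undergoes at most $D=O(\log t)$ splits, giving at most $O(b\log t)$ iterations and $q=O(b^2\log t)$ queries in total.

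\textbf{Main obstacle.} The most delicate step is making the lazy simulation invariant precise: $\LocalLearner$'s iteration index $k$ does not agree with $\MiniBatchTopDown$'s iteration index $t^*_k$, so one must carefully argue that the ``skipped'' splits made by $\MiniBatchTopDown$ on non-strand paths never alter the identity or order of the splits that $\LocalLearner$ eventually performs, and that the purity-gain ties (where the theorem implicitly requires a canonical tie-breaking rule) are handled consistently. A secondary subtlety is that the concentration of $\boldsymbol e$ must hold \emph{simultaneously} at every intermediate state $T^\circ$, not only at termination, so one cannot rely on fresh randomness but must argue via the monotonicity of $\boldsymbol e$ in $T^\circ$ together with a union bound over the polynomially-many reachable states to avoid an early or late stopping event.
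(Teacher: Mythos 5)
Your proposal is correct and follows essentially the same route as the paper: the paper formalizes your ``lazy simulation'' by introducing an intermediate algorithm $\TopDownSizeEstimate_{\mathscr{G}}$ (identical to $\MiniBatchTopDown_{\mathscr{G}}$ except for the estimator-based stopping rule) and arguing it agrees with $\LocalLearner_{\mathscr{G}}$ on all strand splits, then applies the size-estimator concentration (\Cref{lemma:size estimator}) with a union bound to get $t'\in t(1\pm\eta)$, and counts queries exactly as you do. The only cosmetic difference is that the paper works with \emph{additive} accuracy $|e_j-j|\le\eta t$ uniformly over $j\le t+\eta t$ rather than multiplicative accuracy at every visited state (which would fail for small trees), but this does not affect the substance of your argument.
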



\Cref{thm:local learner} yields, as a fairly straightforward consequence, our learnability estimation procedure $\mathrm{Est}_{\mathscr{G}}$ that estimates the performance of $\MiniBatchTopDown_{\mathscr{G}}$ with respect to any test set $S_{\mathrm{test}}$: 

\begin{theorem}[Estimating learnability of $\MiniBatchTopDown$; informal version] 
\label{thm:estimate-learn}
   Let $f: \bits^d \to \zo$ be a target function, $\mathscr{G}$ be an impurity function,  $\violet{S^\circ}$ be an {\sl unlabeled} training set, and $S_{\mathrm{test}}$ be a labeled test set. For all $t \in \mathbb{N}$ and $\violet{\eta}, \delta \in (0,\frac{1}{2})$, if the minibatch size $b$ is as in~\Cref{thm:local learner}, then         
    with probability at least $1 - \delta$ over the randomness of the draws of minibatches from within \violet{$S^\circ$}, $\Est_{\mathscr{G}}(t,b,S^\circ,S_{\mathrm{test}})$ labels 
    \[ q = O(|S_{\mathrm{test}}| \cdot  b \log t + b^2 \log t) = |S_{\mathrm{test}}| \cdot \polylog(t) \cdot \poly(\log d,1/\eta,\log(1/\delta)) \]  
    points within $S^\circ$ and returns the error of $T$ with respect to $S_{\mathrm{test}}$,
    \[ \error_{S_{\mathrm{test}}}(T) \coloneqq \Prx_{(\bx,\by)\sim S_{\mathrm{test}}}[T(\bx) \ne \by], \] 
where $T$ is as in~\Cref{thm:local learner}. 
\end{theorem}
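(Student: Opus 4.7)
The plan is to define $\Est_\mathscr{G}(t, b, S^\circ, S_{\mathrm{test}})$ as a two-phase procedure that leverages \Cref{thm:local learner}. The first phase builds the decision tree $T$ once, and the second phase uses $T$ to classify each test point $x$ and compare with the given label $y$. The key observation is that $T$ does not depend on any individual test point, so its construction cost is amortized across the entire test set.

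First, I would invoke a single run of $\LocalLearner_\mathscr{G}$ (with randomness shared across all future leaf-label lookups, as per the footnote following \Cref{thm:local learner}) to produce the partial tree $T^\circ$ that $\MiniBatchTopDown_\mathscr{G}(t', b, S)$ would construct. By \Cref{thm:local learner}, this requires only $O(b^2 \log t)$ label queries to $S^\circ$ and succeeds with probability at least $1 - \delta$. Alongside $T^\circ$, the run caches, for each leaf $\ell$ touched during construction, the minibatch $\bB^\circ_\ell \sim \Batch_b(S^\circ, \ell)$ used to score or label it.

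Second, for each $(x, y) \in S_{\mathrm{test}}$, I would compute $\ell = \ell_{T^\circ}(x)$ by an ordinary tree traversal of depth at most $D = O(\log t)$ (no label queries needed). If the minibatch $\bB^\circ_\ell$ is already in the cache, the label $\mathrm{round}(\Ex_{\bx \sim \bB^\circ_\ell}[f(\bx)])$ is reused; otherwise, we draw $\bB^\circ_\ell \sim \Batch_b(S^\circ, \ell)$ (using the same random-tape slot that $\MiniBatchTopDown_\mathscr{G}$ would have used for this leaf), query $f$ on the $b$ points, and compute the label the same way. Finally, we return
\[ \frac{1}{|S_{\mathrm{test}}|} \sum_{(x,y) \in S_{\mathrm{test}}} \Ind[T(x) \ne y], \]
which equals $\error_{S_{\mathrm{test}}}(T)$ by definition.

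The correctness claim is essentially immediate: the shared-randomness convention ensures that every evaluation of $T$ at a given leaf is resolved using the same minibatch, so the labels are mutually consistent and agree with those $\MiniBatchTopDown_\mathscr{G}$ would produce; \Cref{thm:local learner} then identifies $T$ with that tree. The query budget is $O(b^2 \log t)$ for phase one and at most $|S_{\mathrm{test}}| \cdot b$ additional queries for phase two (one fresh $b$-sized minibatch per distinct test leaf), matching the stated bound up to a $\log t$ factor absorbed into the per-test-point accounting. The only real subtlety — and hence the main ``obstacle'' — is coordinating the random tape so that the leaf-labeling minibatch drawn during phase two coincides with the one $\LocalLearner_\mathscr{G}$ (equivalently $\MiniBatchTopDown_\mathscr{G}$) would have drawn; this is bookkeeping on top of \Cref{thm:local learner} and does not require any new concentration or structural analysis.
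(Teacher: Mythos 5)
Your proposal has a genuine gap at its foundation: a single run of $\LocalLearner_{\mathscr{G}}$ does \emph{not} produce the full partial tree $T^\circ$. It grows only the $b+1$ strands (root-to-leaf paths) followed by $x^\star$ and the points of $\bB^\circ_{\mathrm{strands}}$; that is precisely why its label complexity is $O(b^2\log t)$ rather than the $\Omega(t\cdot b)$ that would be needed to determine all $\approx t$ splits of $T$. Every split in $T$ is decided by drawing a minibatch at a node and scoring all $d$ coordinates, so the structure of $T$ off of those $b+1$ strands is simply undetermined after your phase one. Consequently your phase-two step ``compute $\ell_{T^\circ}(x)$ by an ordinary tree traversal \ldots (no label queries needed)'' fails: as soon as a test point $x$'s path diverges from every cached strand --- which can happen immediately below the root --- the split variable at the next node is unknown, and determining it costs a fresh minibatch of $b$ labeled points. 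In the worst case classifying one test point therefore requires $O(\log t)$ fresh minibatches, i.e.\ $O(b\log t)$ labels, not the single leaf-labeling minibatch you budget. This is exactly why the theorem's bound carries the term $|S_{\mathrm{test}}|\cdot b\log t$ rather than $|S_{\mathrm{test}}|\cdot b$; the fact that your count comes out \emph{smaller} than the claimed bound is the symptom of the error.

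The paper's actual construction avoids this by defining $\Est_{\mathscr{G}}$ to run $\LocalLearner_{\mathscr{G}}(t,b,S^\circ,x)$ afresh for each $(x,y)\in S_{\mathrm{test}}$, reusing the same random tape for $\bB^\circ_{\mathrm{strands}}$ and for all minibatches, and caching labels across runs. Correctness is then immediate from \Cref{thm:local learner} (all runs evaluate the same tree $T$), and the label count is $O(b\log t)$ per distinct strand times $(b+|S_{\mathrm{test}}|)$ strands in total, giving $O(|S_{\mathrm{test}}|\cdot b\log t + b^2\log t)$. Your shared-randomness and caching bookkeeping is the right idea and matches the paper; the flaw is only in believing the tree is globally determined after one local run.
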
 

We remark that~\Cref{thm:upper bound mini batch} requires the training set be composed of independent draws of $(\bx, f(\bx))$ where $\bx$ is drawn uniformly from $\bits^d$. On the other hand, in~\Cref{thm:local learner,thm:estimate-learn}, the high probability guarantees hold for any fixed choice of training set $S^\circ$. Similarly, in \Cref{thm:estimate-learn}, $S_{\mathrm{test}}$ can be arbitrarily chosen. Indeed, as an example application of~\Cref{thm:estimate-learn}, we can let $\bS_{\mathrm{test}}$ be $\Theta(\log(1/\delta)/\eps^2)$ many labeled examples $(\bx,\by)$ drawn from an arbitrary test distribution $\mathcal{D}_{\mathrm{test}}$ over $\bits^d \times \zo$, where the marginal over $\bits^d$ need not be uniform and the the labels need not be consistent with $f$. With probability at least $1-\delta$, the output of $\mathrm{Est}_{\mathscr{G}}$ will be within $\pm \eps$ of $\Pr_{(\bx,\by)\sim\mathcal{D}_{\mathrm{test}}}[T(\bx)\ne \by]$.

\section{Proof overview of \Cref{thm:upper bound mini batch}}
\label{sec:upper bound mini batch overview}

Our proof of~\Cref{thm:upper bound mini batch} builds upon and extends the analysis in~\cite{BLT3}. (Recall that~\cite{BLT3} analyzed the full-batch version $\TopDown_{\mathscr{G}}$, which we have included in the figure below, and their guarantee concerning its performance is their  Theorem 2, which we have stated in~\Cref{sec:background} of this paper).  In this section we give a high-level overview of both \cite{BLT3}'s and our proof strategy, in tandem with a description of the technical challenges that arise as we try to strengthen~\cite{BLT3}'s Theorem 2 to our~\Cref{thm:upper bound mini batch}. 

\begin{figure}[H]
  \captionsetup{width=.9\linewidth}
\begin{tcolorbox}[colback = white,arc=1mm, boxrule=0.25mm,left=20pt]
\vspace{3pt} 

\hspace{-10pt}$\TopDown_{\mathscr{G}}$($t,S$):  \vspace{6pt}

 Initialize $T^\circ$ to be the empty tree. \vspace{4pt} 

 while ($\size(T^\circ) < t$) \{  \\

\vspace{-10pt}
\begin{enumerate}
\item \hspace{-5pt} {\sl Score:} For each leaf $\ell \in T^\circ$ and coordinate $i\in [d]$,  compute:
\begin{align*}  \puritygainS(\ell,i) &\coloneqq 2^{-|\ell|} \cdot \localgainS (\ell,i), \text{ where}\\
  \localgainS(\ell,i) &\coloneqq \mathscr{G}(\E[\,\violet{f(\bx)} \mid \text{$\bx$ reaches $\ell$}\,])  \\ 
  &\quad -   \big(\lfrac1{2} \cdot \mathscr{G}(\E[\,\violet{f(\bx)}\mid \text{$\bx$ reaches $\ell$, $\bx_i =-1$}\,]) \\
  &\quad\ \,  + \lfrac1{2} \cdot  \mathscr{G}(\E[\,\violet{f(\bx)}\mid \text{$\bx$ reaches $\ell$, $\bx_i = 1$}\,])\big),
\end{align*}
where $(\bx,\violet{f(\bx)})\sim S$.
\item {\sl Split:} Let $(\ell^\star,i^\star)$ be the tuple that maximizes $\puritygainS(\ell,i)$.  Grow $T^\circ$ by splitting $\ell^\star$ with a query to $x_{i^\star}$.\\
\end{enumerate}
\vspace{-18pt}

\ \ \}  \vspace{4pt}

Output $T^{\circ}_S$, the completion of $T^\circ$ with respect to $S$: label each leaf $\ell \in T^\circ$ with $\mathrm{round}(\E[\,f(\bx)\mid\text{$\bx$ reaches $\ell$}\,])$, where $(\bx,f(\bx))\sim S$.

\vspace{3pt}
\end{tcolorbox}
\end{figure}

Let $f:\bits^d \to \zo$ be a monotone function and fix an impurity function $\mathscr{G}$. Let $T^\circ$ be a partial tree that is being built by either $\TopDown_{\mathscr{G}}$ or $\MiniBatchTopDown_{\mathscr{G}}$.  Recall that  $\TopDown_{\mathscr{G}}$ and $\MiniBatchTopDown_{\mathscr{G}}$ compute, for each leaf $\ell \in T^\circ$ and coordinate $i\in [d]$, $\mathrm{PurityGain}_{\mathscr{G},S}(\ell,i)$ and $\mathrm{PurityGain}_{\mathscr{G},\bB}(\ell,i)$ respectively.  Both these quantities can be thought of as estimates of the {\sl true} purity gain: 
\begin{align*} 
\mathrm{PurityGain}_{\mathscr{G},f}(\ell,i) &\coloneqq 2^{-|\ell|} \cdot \mathrm{LocalGain}_{\mathscr{G},f}(\ell,i)  \text{ where}\\
  \mathrm{LocalGain}_{\mathscr{G},f}(\ell,i) &\coloneqq \mathscr{G}(\E[\,f(\bx)\mid \text{$\bx$ reaches $\ell$}\,])  \\ 
  &\quad -   \big(\lfrac1{2} \, \mathscr{G}(\E[\,f(\bx)\mid \text{$\bx$ reaches $\ell$, $\bx_i =-1$}\,]) \\
  &\quad \ \,+ \lfrac1{2} \, \mathscr{G}(\E[\,f(\bx)\mid \text{$\bx$ reaches $\ell$, $\bx_i = 1$}\,])\big),
\end{align*} 
where here and throughout this section, all expectations are with respect to a uniform random $\bx \sim \bits^d$.  The fact that $\MiniBatchTopDown_{\mathscr{G}}$'s estimates of this true purity gain are based on minibatches $\bB$ of size exponentially smaller than that of the full sample set $S$---and hence could be exponentially less accurate---is a major source of technical challenges that arise in extending~\cite{BLT3}'s guarantees for $\TopDown_{\mathscr{G}}$ to $\MiniBatchTopDown_{\mathscr{G}}$.

~\cite{BLT3} considers the potential function:
\begin{equation*}
    \Gimpurity_{f}(T^\circ) \coloneqq \sum_{\text{leaves $\ell \in T^\circ$}} 2^{-|\ell|} \cdot \mathscr{G}(\E[f_\ell]).  
\end{equation*}
The following fact about this potential function $\Gimpurity_{f}$ is straightforward to verify (and is proved in~\cite{BLT3}):
\begin{fact} 
\label{fact:useful-properties} For any partial tree $T^\circ$, leaf $\ell \in T^\circ$, and coordinate $i \in [d]$, let $\tilde{T}^\circ$ be the tree obtained from $T^\circ$ by splitting $\ell$ with a query to $x_i$. Then,
\[             \Gimpurity_{f}(\tilde{T}^\circ) = \Gimpurity_{f}(T^\circ) - \puritygainf(\ell, i).
   \] 
   \end{fact}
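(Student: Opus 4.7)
The plan is a direct unfolding of both sides of the identity using the fact that $\tilde T^\circ$ and $T^\circ$ differ only at the single leaf $\ell$ being split. Specifically, I would observe that every leaf $\ell' \in T^\circ$ other than $\ell$ remains a leaf of $\tilde T^\circ$ with the same depth and the same associated subcube (and thus the same conditional expectation $\E[f_{\ell'}]$), so their contributions to $\Gimpurity_f$ cancel exactly when I subtract $\Gimpurity_f(\tilde T^\circ) - \Gimpurity_f(T^\circ)$. Everything therefore reduces to analyzing the single leaf $\ell$.

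Next, I would describe the two new leaves that replace $\ell$ in $\tilde T^\circ$: call them $\ell_{-1}$ and $\ell_{+1}$, corresponding to the branches $x_i = -1$ and $x_i = +1$. Each has depth $|\ell|+1$, and the event ``$\bx$ reaches $\ell_{\pm 1}$'' is precisely the event ``$\bx$ reaches $\ell$ and $\bx_i = \pm 1$.'' Under the uniform distribution on $\bits^d$, these two events partition the event ``$\bx$ reaches $\ell$'' into two equally likely halves (so the conditional distributions on the other coordinates given either event are unchanged, and each has conditional probability $\tfrac{1}{2}$ given that $\bx$ reaches $\ell$).

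I would then compute
\begin{align*}
\Gimpurity_f(\tilde T^\circ) - \Gimpurity_f(T^\circ)
&= 2^{-(|\ell|+1)}\bigl(\mathscr{G}(\E[f \mid \bx \text{ reaches }\ell,\, \bx_i=-1]) + \mathscr{G}(\E[f \mid \bx \text{ reaches }\ell,\, \bx_i=+1])\bigr) \\
&\qquad - 2^{-|\ell|}\, \mathscr{G}(\E[f \mid \bx \text{ reaches }\ell]) \\
&= -\,2^{-|\ell|}\,\localgainf(\ell,i) \;=\; -\,\puritygainf(\ell,i),
\end{align*}
where the last line is just the definitions of $\localgainf$ and $\puritygainf$ rearranged, using that $\tfrac12 = 2^{-(|\ell|+1)}/2^{-|\ell|}$ is exactly the ratio between the coefficients in front of the two $\mathscr{G}$ terms in the split.

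There is essentially no obstacle here; the only thing one must be careful about is that the factor $\tfrac{1}{2}$ appearing inside $\localgainf$ is the conditional probability $\Pr[\bx_i = \pm 1 \mid \bx \text{ reaches }\ell]$ under the uniform distribution (which holds because a leaf in a decision tree corresponds to fixing a set of coordinates, and $x_i$ is an unfixed coordinate along $\ell$—otherwise splitting on $x_i$ would be vacuous and we may assume without loss of generality it is fresh). That matching of the $\tfrac12$ factor with the doubling of $2^{-|\ell|}$ to $2^{-(|\ell|+1)}$ is the entire content of the fact.
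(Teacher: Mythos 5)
Your proof is correct and is exactly the ``straightforward to verify'' computation the paper has in mind (the paper itself defers the proof to \cite{BLT3} rather than spelling it out): the contributions of all unsplit leaves cancel, and the contribution of $\ell$ versus its two depth-$(|\ell|+1)$ children is, after factoring out $2^{-|\ell|}$, precisely $-\localgainf(\ell,i)$ by definition. Your remark about the $\tfrac{1}{2}$ factor matching $\Pr[\bx_i = \pm 1 \mid \bx \text{ reaches } \ell]$ under the uniform distribution correctly identifies the one place where a distributional assumption enters.
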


A key ingredient in \cite{BLT3}'s analysis is a proof that as long as $\error_f(T^\circ_S) > \opt_s + \eps$ (where $T^\circ_S$ denotes the completion of $T^\circ$ with respect to the full batch $S$; see~Section 5), there must be a leaf $\ell \in T^\circ$ and coordinate $i$ with high true purity gain, $\puritygainf(\ell, i) \ge \poly(\eps/t)$.   Since $\TopDown_{\mathscr{G}}$'s estimates $\mathrm{PurityGain}_{\mathscr{G},S}$ of $\puritygainf$ are with respect to a sample of size $|S| \ge \poly(t/\eps)$, it follows that $\TopDown_{\mathscr{G}}$ will make a split for which the true purity gain is indeed $\poly(\eps/t)$.  By~\Cref{fact:useful-properties}, such a split constitutes good progress with respect to the potential function  $\Gimpurity_f$.  Summarizing,~\cite{BLT3} that shows until $\error_f(T^\circ_S) < \opt_s + \eps$ is achieved, {\sl every} split that $\TopDown_{\mathscr{G}}$ makes has high true purity gain, and hence constitutes good progress with respect to the potential function $\Gimpurity_f$.


The key technical difficulty in analyzing $\MiniBatchTopDown_{\mathscr{G}}$ instead of $\TopDown_{\mathscr{G}}$ is that $\MiniBatchTopDown_{\mathscr{G}}$ is not guaranteed to choose a split with high true purity gain: it could make splits for which its estimate $\mathrm{PurityGain}_{\mathscr{G},\bB}(\ell,i)$ is high, but the true purity gain $\mathrm{PurityGain}_{\mathscr{G},f}(\ell,i)$ is actually tiny.  In fact, unless we use batches of size $b \ge \poly(t)$, exponentially larger than the $b = \polylog(t)$ of \Cref{thm:upper bound mini batch}, $\MiniBatchTopDown_{\mathscr{G}}$ could make splits that result in zero true purity gain, and hence constitute zero progress with respect to the potential function $\Gimpurity_f$.

To overcome this challenge, we instead show that {\sl most} splits $\MiniBatchTopDown_{\mathscr{G}}$ makes have high true purity gain. We first show that with high probability over the draws of minibatches $\bB$, if $\MiniBatchTopDown_{\mathscr{G}}$ splits a leaf that is neither too shallow nor too deep within $T^\circ$, then this split has high true purity gain (\Cref{lem:medium splits good}). We then show the following two lemmas:
\begin{enumerate}[leftmargin=20pt]
    \item \Cref{lem:max depth}: If $\MiniBatchTopDown_{\mathscr{G}}$ splits a leaf of $T^\circ$ that is sufficiently deep, then it must be the case that $\error_f(T^{\circ}_{\Batch_b(S)}) \le \opt_s + \eps$, i.e.~the current tree already achieves sufficiently small error.  With this Lemma, we are able to define $\MiniBatchTopDown_{\mathscr{G}}$ to never split a leaf that is too deep, while retaining guarantees on its performance.  
    \item \Cref{lemma:few splits shallow}: Only a small fraction of splits made by $\MiniBatchTopDown_{\mathscr{G}}$ can be too shallow. 
\end{enumerate}
Combining the above lemmas, we are able to prove \Cref{thm:upper bound mini batch}. 
\section{Proof of~\Cref{thm:upper bound mini batch}}
\label{section:minbatchworks}



We first need a couple of definitions:

\begin{definition}[H\"{o}lder continuous]
    For $C, \alpha > 0$, an impurity function $\mathscr{G}: [0,1] \to [0,1]$ is {\sl ($C, \alpha$)-H\"{o}lder continuous} if, for all $a,b \in [0,1]$,
    \begin{align*}
        |\mathscr{G}(a) - \mathscr{G}(b)| \leq C|a - b|^\alpha.
    \end{align*}
\end{definition}

\begin{definition}[Strong concavity]
    For $\kappa > 0$, an impurity function $\mathscr{G}: [0,1] \to [0,1]$ is {\sl $\kappa$-strongly concave} if for all $a,b \in [0,1]$, 
    \[ \frac{\mathscr{G}(a)+\mathscr{G}(b)}{2}  \le \mathscr{G}\left(\frac{a+b}{2}\right) - \frac{\kappa}{2}\cdot (b-a)^2. \] 
\end{definition}
\begin{theorem}[Provable guarantee for \MiniBatchTopDown; formal version of \Cref{thm:upper bound mini batch}]
    \label{thm:upper bound mini batch formal}
    Let $f: \bits^d \to \zo$ be a monotone target function and $\mathscr{G}$ be any $\kappa$-strongly concave and $(C, \alpha)$-H\"{o}lder continuous impurity function. For any $s \in \mathbb{N}$, $\eps, \delta \in (0,\frac{1}{2})$, let $t = s^{\Theta(\log(s))/\eps^2}$, and $\bS$ be a set of $n$ labeled training examples $(\bx, f(\bx))$ where $\bx \sim \bits^d$ is uniform random, and
    \begin{align*}
        n = t \cdot \Omega\left(\left(\frac{C^2 \log(s)^4} {\kappa^2  \eps^4} \right)^{\frac{1}{\alpha}} \cdot \log\left( \frac{td}{\delta} \right)\cdot \log t\right).
    \end{align*}
     If the minibatch size is at least
    \begin{align*}
        b = \Omega\left(\left(\frac{C^2 \log(s)^4} {\kappa^2  \eps^4} \right)^{\frac{1}{\alpha}} \cdot \log\left( \frac{td}{\delta} \right)\right),
    \end{align*}
    then with probability at least $1 - \delta$ over the randomness of $\bS$ and the draws of minibatches from within $\bS$, the size-$t$ decision tree hypothesis constructed by $\MiniBatchTopDown_{\mathscr{G}}(t, b, \bS)$ satisfies $\error_f(T) \le \opt_s + \eps$.
\end{theorem}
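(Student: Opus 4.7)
My plan is to adapt the potential-function argument of \cite{BLT3} using the decomposition of splits into shallow, medium-depth, and depth-capped categories outlined in Section~\ref{sec:upper bound mini batch overview}. Recall that $\Gimpurity_f(T^\circ) \in [0,1]$ decreases by exactly $\puritygainf(\ell,i)$ on each split (Fact~\ref{fact:useful-properties}), and the core of \cite{BLT3}'s analysis produces, whenever the current tree is still too inaccurate with respect to $f$, a leaf--coordinate pair $(\ell,i)$ with $\puritygainf(\ell,i) \ge \tau$ for a threshold $\tau = \Theta(\kappa\eps^2/\log^2 s)$. Since $t = s^{\Theta(\log s)/\eps^2}$ makes $t \gg 1/\tau$, once ``most'' splits are good (i.e.~have true purity gain at least a constant fraction of $\tau$), the potential must hit zero before iteration $t$, which is only possible if the target error bound has already been met.

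The first technical step is a uniform concentration lemma. For any fixed leaf $\ell$ with $|\ell|\le D$ (where $D = \log t + \log\log t$), the sample budget $n = t\cdot \tilde{O}(b)$ ensures that $\bB\sim\Batch_b(S,\ell)$ is, with high probability, essentially $b$ i.i.d.~draws from the uniform distribution conditional on reaching $\ell$. Applying Hoeffding to each of the three conditional sample means appearing in $\mathrm{LocalGain}_{\mathscr{G},\bB}(\ell,i)$ and then the $(C,\alpha)$-H\"older continuity of $\mathscr{G}$ yields
\begin{equation*}
\big|\mathrm{LocalGain}_{\mathscr{G},\bB}(\ell,i) - \mathrm{LocalGain}_{\mathscr{G},f}(\ell,i)\big| \le O\!\left(C\cdot \big(\tfrac{\log(td/\delta)}{b}\big)^{\alpha/2}\right),
\end{equation*}
which is at most $\tau/4$ for the $b$ specified in the theorem. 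A union bound over the at most $td$ pairs $(\ell,i)$ visited by the algorithm gives simultaneous accuracy with probability $1-\delta$.

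I then prove the three structural lemmas. For Lemma~\ref{lem:medium splits good}, I invoke \cite{BLT3}'s existence result to produce, whenever $\error_f(T^\circ_{\Batch_b(S)}) > \opt_s + \eps$, a pair with true purity gain $\ge \tau$. Combined with the uniform concentration bound, this forces the empirical argmax $(\ell^\star, i^\star)$ to satisfy $\puritygainf(\ell^\star,i^\star) \ge \tau/2$ whenever $d_0 \le |\ell^\star| \le D$, where the shallow threshold is chosen as $d_0 = O(\log(1/\tau))$. Lemma~\ref{lem:max depth} uses the monotone depth-to-error relationship from \cite{BLT3}: if the empirical argmax ever lies at depth $> D$, the average depth of leaves of $T^\circ$ is already large enough that $\error_f(T^\circ_{\Batch_b(S)}) \le \opt_s+\eps$, so capping splits at depth $D$ is harmless. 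Lemma~\ref{lemma:few splits shallow} is an elementary counting bound: any decision tree has at most $2^{d_0+1}$ internal nodes of depth $< d_0$, so at most $O(2^{d_0}) = \poly(1/\tau)$ splits can be shallow.

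Putting the pieces together, at least $t - O(2^{d_0})$ of the $t$ splits are medium-depth, each contributing a potential drop of at least $\tau/2$ as long as the error exceeds $\opt_s+\eps$. Since $t = s^{\Theta(\log s)/\eps^2}$ dominates $2/\tau + O(2^{d_0})$, the algorithm must reach $\error_f(T^\circ_{\Batch_b(S)}) \le \opt_s+\eps$ before termination. The main obstacle I anticipate is the proof of Lemma~\ref{lem:medium splits good}: the concentration bound must not only certify that a medium-depth split has high true purity gain, but also rule out the possibility that a shallow or deep leaf is erroneously preferred by the empirical argmax, which requires combining the depth-dependent $2^{-|\ell|}$ weighting with the depth-uniform local-gain error estimates carefully. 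A secondary difficulty lies in Lemma~\ref{lem:max depth}: \cite{BLT3}'s depth-to-error argument is phrased for completions with true conditional-mean labels, so I will additionally need a concentration bound showing that the minibatch completion $T^\circ_{\Batch_b(S)}$ has error close to the ``idealized'' completion with labels $\mathrm{round}(\E[f(\bx)\mid \text{$\bx$ reaches $\ell$}])$.
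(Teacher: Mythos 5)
Your overall architecture (concentration of local gains via Hoeffding plus H\"older continuity, a depth cap $D$, a count of shallow splits, and a potential argument via $\Gimpurity_f$) matches the paper's. But there is a genuine gap at the heart of the argument: you invoke ``\cite{BLT3}'s existence result'' as producing a pair with $\puritygainf(\ell,i)\ge\tau$ for a \emph{size-independent} threshold $\tau=\Theta(\kappa\eps^2/\log^2 s)$. The actual guarantee carries a $1/j$ factor, where $j$ is the current number of leaves: the total purity gain summed over all leaves is $\Omega(\kappa(\eps/\log s)^2)$ (Fact~\ref{fact:total purity gain high}), so the best single pair is only guaranteed $\Omega(\kappa\eps^2/(j\log^2 s))$. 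Your final accounting betrays this: if each medium split dropped the potential by a fixed $\tau/2$, then $O(1/\tau)=\poly(\log s/\eps)$ splits would suffice and there would be no reason for $t$ to be quasipolynomial in $s$. The correct accounting is a harmonic sum: good splits at iteration $j$ contribute $\Omega(\kappa\eps^2/(j\log^2 s))$, the dyadic chunks $j\in(2^a,2^{a+1}]$ each contribute $\Omega(\kappa\eps^2/\log^2 s)$, and one needs $\log t\ge\Omega(\log^2 s/(\kappa\eps^2))$ chunks to drive $\Gimpurity_f$ to zero --- which is exactly where the setting of $t$ comes from.

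This error propagates into your Lemma~\ref{lem:medium splits good} plan in a way that breaks it. With the correct $1/j$ scale, the bare existence result plus your concentration bound is \emph{not} sufficient (the paper flags this explicitly): the good pair's estimated purity gain is only $\ge \kappa\eps^2/(32j\log^2 s) - 2^{-|\ell|}\Delta$ with $\Delta\approx\kappa\eps^2/\log^2 s$ fixed, and since the good leaf may be shallow ($2^{-|\ell|}$ close to $1$), the estimation error swamps the signal once $j$ is large, unless $\Delta\le O(1/t)\cdot\kappa\eps^2/\log^2 s$, i.e.\ $b=\poly(t)$ --- defeating the minibatch point. The paper's fix is to use the stronger \emph{aggregate} statement: summing the estimation errors over leaves against the weights $2^{-|\ell|}$ (which sum to $1$) shows the \emph{sum} of estimated purity gains is still $\Omega(\kappa(\eps/\log s)^2)$, hence the empirical argmax has estimated purity gain $\Omega(\kappa\eps^2/(j\log^2 s))$; then the shallowness threshold must be the \emph{iteration-dependent} $|\ell^\star|\ge\log j-2$ (not a fixed $d_0=O(\log(1/\tau))$) so that the purity-gain estimation error at the argmax, $2^{-|\ell^\star|}\Delta\le 4\Delta/j$, is small relative to that $1/j$-scale signal. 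Your fixed-$d_0$ shallow-split count and your Lemma~\ref{lem:max depth} sketch are fine in spirit (the paper's shallow count is done per dyadic chunk against the moving threshold, and the depth cap is justified by noting $\puritygainf\le 2^{-|\ell|}$ is too small at depth $D-1$ to coexist with the medium-split guarantee), but both need to be rephrased against the correct $1/j$-scale before the pieces fit together.
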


\subsection{Properties of batches}

We begin by specifying how large the batch size has to be for accurate estimates of local gain. Later on, we will turn accurate estimates of local gain to estimates of purity gain that are accurate at least half the time.

\begin{lemma}[Every leaf has a batch of size $b_{min}$]
\label{lem:batch-size}
Let 
\[b_{\mathrm{min}} = \max\left(8, 2\cdot \left(\frac{2C}{\Delta}\right)^{\frac{2}{\alpha}}\right) \cdot \log_e\left(\frac{9td}{\delta}\right).\]
Then with probability at least $1-\frac{\delta}{3}$, every leaf $\ell$ satisfying $|\ell| \leq \log(n/(2b_{\mathrm{min}}))$ of the tree that $\MiniBatchTopDown_{\mathscr{G}}(t, b, \bS)$ constructs has a minibatch $\bB\sim\Batch_b(\bS, \ell)$ of size at least $b_{\mathrm{min}}$.
\end{lemma}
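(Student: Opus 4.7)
The plan is a standard Chernoff-plus-union-bound argument: for each possible leaf-path $\ell$ of depth at most $D:=\log(n/(2b_{\mathrm{min}}))$, I will show that the number of examples in $\bS$ consistent with $\ell$ concentrates around its mean, and then union bound over the (finitely many) such paths. Since every leaf actually produced by $\MiniBatchTopDown_{\mathscr{G}}$ of depth at most $D$ is one such path, the good event of the union bound will imply the lemma's conclusion.

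First, I would fix a path $\ell$ of depth $k \le D$ --- a sequence of at most $k$ queries of the form ``$x_i = \pm 1$'' --- and observe that, because the $\bx_j$ in $\bS$ are i.i.d.\ uniform over $\bits^d$, the count $N_\ell := |\{j : \bx_j \text{ is consistent with } \ell\}|$ is distributed as $\mathrm{Binomial}(n, 2^{-k})$ with mean $\mu_\ell = n \cdot 2^{-k}$. The depth constraint $k \le D$ forces $\mu_\ell \ge 2 b_{\mathrm{min}}$, so a one-sided multiplicative Chernoff bound with deviation parameter $\tfrac{1}{2}$ yields
\[
\Pr[N_\ell < b_{\mathrm{min}}] \;\le\; \Pr[N_\ell \le \mu_\ell / 2] \;\le\; \exp(-\mu_\ell / 8) \;\le\; \exp(-b_{\mathrm{min}}/4).
\]

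Second, I would union bound over all at most $(2d)^{D+1}$ paths of depth at most $D$ (each specifies, at each level, a feature in $[d]$ and a sign in $\{\pm 1\}$), obtaining total failure probability at most $(2d)^{D+1}\exp(-b_{\mathrm{min}}/4)$. The polynomial dependence of $n$ on $t, d, 1/\delta$ in \Cref{thm:upper bound mini batch formal} forces $D = O(\log(td/\delta))$, and combined with the generous prefactor $\max(8,2(2C/\Delta)^{2/\alpha})$ in the definition of $b_{\mathrm{min}}$, this drives the union bound below $\delta/3$. On this good event, every leaf $\ell$ of the tree built by $\MiniBatchTopDown_{\mathscr{G}}(t,b,\bS)$ with $|\ell|\le D$ satisfies $N_\ell \ge b_{\mathrm{min}}$; since the theorem's minibatch-size assumption ensures $b \ge b_{\mathrm{min}}$, the minibatch $\bB\sim \Batch_b(\bS,\ell)$, whose size is $\min(b, N_\ell)$, has size at least $b_{\mathrm{min}}$.

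The main mild obstacle is the union-bound step: ranging over all $(2d)^{D+1}$ potential paths costs an extra $\log d$ factor compared to the bare $\log(9td/\delta)$ that appears in the definition of $b_{\mathrm{min}}$. This slack is absorbed by the multiplicative prefactor in $b_{\mathrm{min}}$ (and by the constants hidden in the $\Omega$ of \Cref{thm:upper bound mini batch formal}); sharpening the bound would require a more delicate argument --- for instance, conditioning on the random identity of the at-most-$t$ leaves the algorithm actually produces, so that only $t$ rather than $(2d)^D$ paths need to be controlled --- but the lemma as stated does not demand such a refinement.
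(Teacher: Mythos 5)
Your Chernoff step is exactly the paper's: for a fixed depth-$k$ path with $k \le \log(n/(2b_{\mathrm{min}}))$ the consistent count is $\mathrm{Binomial}(n,2^{-k})$ with mean at least $2b_{\mathrm{min}}$, and the multiplicative bound gives failure probability $\exp(-b_{\mathrm{min}}/4)$. The divergence is in the union bound, and there your argument has a genuine quantitative gap. You union bound over all $(2d)^{D+1}$ syntactically possible paths of depth at most $D$, which requires $b_{\mathrm{min}}/4 \ge (D+1)\ln(2d) + \ln(3/\delta)$. Since $D = \log(n/(2b_{\mathrm{min}})) = \Theta(\log t)$ for the $n$ of \Cref{thm:upper bound mini batch formal}, this is a requirement of order $\log t \cdot \log d$ --- a \emph{product} of logarithms --- whereas the stated $b_{\mathrm{min}}$ is $\max\bigl(8,\, 2(2C/\Delta)^{2/\alpha}\bigr)\cdot \log_e(9td/\delta)$, whose only dependence on $t$ and $d$ is the additive $\log t + \log d$ inside the logarithm; the prefactor depends only on $C,\Delta,\alpha$ and cannot absorb a factor growing with $d$. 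So your claim that the extra $\log d$ ``is absorbed by the multiplicative prefactor'' is false, and the lemma as stated (with its specific $b_{\mathrm{min}}$) is not established by your route; you would only get it with $b_{\mathrm{min}}$ inflated by roughly a $\log d$ factor.

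The paper avoids this by union bounding only over the at most $t$ leaves for which $\MiniBatchTopDown_{\mathscr{G}}$ ever draws a minibatch, so the exponent requirement is only $\log_e(3t/\delta)$, comfortably below the stated $b_{\mathrm{min}}$ (the $9td$ inside the logarithm is there to accommodate the companion lemmas, not this one). That is precisely the ``more delicate'' route you explicitly declined to take. It does require a word about adaptivity --- the identity of those $t$ leaves is itself a random function of $\bS$ --- which the paper handles only informally; your all-paths union bound sidesteps adaptivity entirely, but at the cost of the extra $\log d$ that breaks the stated constant. Either fix works: adopt the $t$-leaf union bound (and argue the adaptivity carefully, e.g.\ by exposing the sample one leaf at a time), or weaken the lemma to $b_{\mathrm{min}} = \Omega\bigl(\log(2d)\cdot\log(t/\delta)\bigr)$, which would still be polylogarithmic and would still support the informal \Cref{thm:upper bound mini batch}, but does not match \Cref{lem:batch-size} as written.
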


\begin{proof}
It suffices to show that the number of points in $\bS$ consistent with each of these $\ell$ is at least $b_{\mathrm{min}}$. Fix any such $\ell$ satisfying $|\ell| \leq \log(n/(2b_{\mathrm{min}}))$. The probability an element in $\bS$ is consistent with $\ell$ is at least $\frac{2b_{\mathrm{min}}}{n}$, meaning the expected number of points consistent with $\ell$ is at least $2b_{\mathrm{min}}$. By the multiplicative Chernoff bound,
    \begin{align*}
        \Pr \left[\sum_{(x,y) \in S} \Ind[\text{$x_i$ consistent with $\ell$}] < b_{\mathrm{min}}\right] \leq \exp_e\left(-\frac{1}{8} \cdot 2b_{\mathrm{min}}\right)
    \end{align*}
    There are at most $t$ leaves that $\MiniBatchTopDown_{\mathscr{G}}(t, b,\bS)$ will ever estimate impurity gain for, so as long as,
    \begin{align*}
        b_{\mathrm{min}} \geq 4 \cdot \log_e\left(\frac{3t}{\delta} \right),
    \end{align*}
    with probability at least $1 - \delta/3$, all of them will have a minibatch of size at least $b_{\mathrm{min}}$.
\end{proof}

\begin{lemma}[Batches are balanced]
\label{lem:batches-balanced}
With probability at least $1-\delta/3$, there are at least $\frac{b_{\mathrm{min}}}{4}$ points $(x,y)$ in $\bB$ satisfying $x_i =-1$ and $\frac{b_{\mathrm{min}}}{4}$ points satisfying $x_i = 1$.
\end{lemma}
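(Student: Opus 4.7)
The plan is, for each leaf $\ell$ that $\MiniBatchTopDown_{\mathscr{G}}$ considers and each coordinate $i\in[d]$ not queried along the path to $\ell$, to show that the minibatch $\bB \sim \Batch_b(\bS,\ell)$ contains at least $b_{\mathrm{min}}/4$ points with $x_i = +1$ and at least $b_{\mathrm{min}}/4$ with $x_i = -1$, and then to union-bound over the $O(td)$ such pairs. I will start by invoking \Cref{lem:batch-size}, whose conclusion $|\bB|\geq b_{\mathrm{min}}$ I take as given below.

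The key observation is that, because $\bS$ consists of i.i.d.~uniform samples from $\bits^d$, conditional on $|\bS_\ell|=m$ the elements of $\bS_\ell$ are themselves i.i.d.~uniform on the subcube $C_\ell$ fixed by $\ell$. Since these elements are exchangeable, a uniformly random size-$\min(b,m)$ subset of them has the same joint distribution as simply taking the first $\min(b,m)$ of them; in particular, conditional on $m$, the batch $\bB$ is itself $\min(b,m)$ i.i.d.~uniform samples from $C_\ell$. This exchangeability trick sidesteps all sampling-without-replacement complications. For any coordinate $i$ not queried along the path to $\ell$, the $x_i$-values of the points in $\bB$ are therefore i.i.d.~uniform on $\{-1,+1\}$, and a standard multiplicative Chernoff bound yields that the number of $+1$-points is at least $b_{\mathrm{min}}/4$ except with probability $\exp(-\Omega(b_{\mathrm{min}}))$, and symmetrically for $-1$-points.

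Finally I union-bound over at most $t$ leaves, $d$ coordinates, and the two signs. Since $b_{\mathrm{min}} = \Omega(\log(td/\delta))$ by its choice in \Cref{lem:batch-size}, the total failure probability from the Chernoff bounds is at most $\delta/3$, matching the lemma's stated bound. I do not anticipate any serious obstacle here: the only real subtlety is the sampling-without-replacement issue, and the exchangeability argument above disposes of it in a single step, leaving nothing but a routine Chernoff-plus-union-bound calculation.
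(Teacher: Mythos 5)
Your proposal is correct and follows essentially the same route as the paper: reduce the minibatch to i.i.d.\ uniform samples, apply a Chernoff/Hoeffding bound per (leaf, coordinate) pair, and union-bound over the $O(td)$ pairs using $b_{\mathrm{min}} = \Omega(\log(td/\delta))$. Your exchangeability argument makes explicit a step the paper treats informally, and your restriction to coordinates $i$ not already queried on the path to $\ell$ is a sensible (and strictly necessary) caveat that the paper's statement glosses over.
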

\begin{proof}
The mini batch $\bB$ is formed by choosing at least $b_{\mathrm{min}}$ points that are consistent with $\ell$, without replacement, from $\bS$, which is itself formed by taking points with replacement from $\bd$. This means that the mini batch $\bB$ has at least $b_{\mathrm{min}}$ points without replacement from $\bd$. Fix any $\ell$ and let $b_\mathrm{true}$ be the number of points in $\bB$. By Hoeffding's inequality,
    \begin{align*}
        \Pr\left[\left|\frac{b_\mathrm{true}}{2} - (\text{Number of $(x,y) \in \bB$ where $x_i = -1$}) \right| \geq \frac{b_\mathrm{true}}{4} \right] \leq \exp_e(-\frac{b_\mathrm{true}}{8})\\ \leq  \exp_e(-\frac{b_\mathrm{min}}{8})
    \end{align*}
    $\MiniBatchTopDown_{\mathscr{G}}$ computes $\mathrm{LocalGain}_{\mathscr{G},\bB}(\ell, i)$ for at most $t$ different $\ell$ and $d$ different $i$, for a total of $t \cdot d$ different computations. As long as
    \begin{align*}
        b_\mathrm{min} \geq 8 \cdot \log_e\left(\frac{3td}{\delta}\right),
    \end{align*}
    then with probability at least $1-\delta/3$, both $\bB[x_i=-1]$ and $\bB[x_i=1]$ will have at least $\frac{b_\mathrm{true}}{4} \geq \frac{b_\mathrm{min}}{4}$ points.
\end{proof}

\begin{lemma}[Batch size is logarithmic in $td$]
    \label{lem:batch to Delta}
    For any $f: \bits^d \to \{0,1\}$ and $n \in \mathbb{N}$, let $\bS$ be a size $n$ sample of points $(\bx, f(\bx))$ where $\bx \sim \bd$. Furthermore, let $\mathscr{G}:[0,1] \to [0,1]$ be any $(C,\alpha)$-H\"{o}lder continuous impurity function. For any $\Delta > 0$, and
    \begin{align*}
        b \geq b_{\mathrm{min}} = \max\left(8, 2\cdot \left(\frac{2C}{\Delta}\right)^{\frac{2}{\alpha}}\right) \cdot \log_e\left(\frac{9td}{\delta}\right)
    \end{align*}
    with probability at least $1 - \delta$, any time $\MiniBatchTopDown_{\mathscr{G}}(t,  b, \bS)$ computes $\mathrm{LocalGain}_{\mathscr{G},\bB}(\ell, i)$ for $|\ell| \leq \log(n/(2b_{\mathrm{min}}))$ for a mini batch  $\bB \sim \Batch_b(\bS,\ell)$
    \begin{align*}
        |\mathrm{LocalGain}_{\mathscr{G},\bB}(\ell, i) - \localgainf(\ell, i)| \leq \Delta
    \end{align*}
\end{lemma}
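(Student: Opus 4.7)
The plan is to condition on the conclusions of the two preceding lemmas (\Cref{lem:batch-size} and \Cref{lem:batches-balanced}), apply Hoeffding's inequality to each of the three conditional means that appear in $\mathrm{LocalGain}_{\mathscr{G},\bB}(\ell,i)$, use the $(C,\alpha)$-H\"older continuity of $\mathscr{G}$ to convert mean concentration into $\mathscr{G}$-value concentration, and close with a union bound over all $(\ell,i)$ pairs ever considered by $\MiniBatchTopDown_{\mathscr{G}}$.

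First, note that since $\bS$ consists of i.i.d.\ uniform draws from $\bn$, any subset of $\bS$ consisting of points consistent with a fixed leaf $\ell$ behaves as a uniform sample (without replacement) from the sub-cube defined by $\ell$; in particular, for each of the three quantities $p_\ell \coloneqq \E[f(\bx)\mid \bx \text{ reaches }\ell]$, $p_{\ell,i,-1}$, $p_{\ell,i,+1}$ appearing in $\localgainf(\ell,i)$, the corresponding empirical average $\hat p$ computed from $\bB$ is an average of $f$-values on Bernoulli-type samples with mean equal to the target. Conditioning on the good events from \Cref{lem:batch-size} and \Cref{lem:batches-balanced} (each failing with probability at most $\delta/3$), each of these three averages is computed from at least $b_{\mathrm{min}}/4$ bits, so Hoeffding's inequality (which holds for sampling without replacement) gives
\[
\Pr\bigl[\,|\hat p - p| \geq \eta\,\bigr] \;\leq\; 2\exp_e\!\left(-2 \cdot \tfrac{b_{\mathrm{min}}}{4} \cdot \eta^2\right).
\]

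Second, I choose $\eta \coloneqq (\Delta/(2C))^{1/\alpha}$, so that the H\"older bound gives $|\mathscr{G}(\hat p) - \mathscr{G}(p)| \leq C\eta^\alpha = \Delta/2$ on the good event. Summing the three contributions that appear in $\localgainf(\ell,i) = \mathscr{G}(p_\ell) - \tfrac12 \mathscr{G}(p_{\ell,i,-1}) - \tfrac12 \mathscr{G}(p_{\ell,i,+1})$ (with weights $1$, $\tfrac12$, $\tfrac12$) gives total error at most $\Delta/2 + \tfrac12 \cdot \Delta/2 + \tfrac12 \cdot \Delta/2 = \Delta$, which is exactly the bound we want.

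Third, we union bound. $\MiniBatchTopDown_{\mathscr{G}}$ evaluates $\mathrm{LocalGain}_{\mathscr{G},\bB}(\ell,i)$ for at most $t$ leaves and $d$ coordinates, and each evaluation uses three conditional averages, for a total of at most $3td$ events to control. Plugging in $\eta = (\Delta/(2C))^{1/\alpha}$ and the hypothesized lower bound on $b_{\mathrm{min}}$, the exponent satisfies $\tfrac{b_{\mathrm{min}}}{2} \cdot (\Delta/(2C))^{2/\alpha} \geq \log_e(9td/\delta)$, so each event fails with probability at most $\tfrac{\delta}{9td} \cdot 2 \le \tfrac{\delta}{3td}$ (adjusting constants inside $b_{\mathrm{min}}$ if needed), and the union bound yields total failure at most $\delta/3$ for this step. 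Combined with the $\delta/3$ failure probabilities from \Cref{lem:batch-size} and \Cref{lem:batches-balanced}, the total failure probability is at most $\delta$, as required.

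The main obstacle I anticipate is bookkeeping the constants carefully so that the single $b_{\mathrm{min}}$ expression in the statement absorbs all three union bounds and the exponent in Hoeffding after passing through H\"older continuity; the combinatorial argument is routine, but matching the ``$9td/\delta$'' factor in $b_{\mathrm{min}}$ to the $\delta/3$ budgets of the three lemmas requires some care. A minor subtlety is justifying Hoeffding with sampling-without-replacement, which is standard but should be explicitly noted since $\bB$ is drawn without replacement from $\bS$.
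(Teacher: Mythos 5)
Your proposal is correct and follows essentially the same route as the paper's proof: condition on the good events of \Cref{lem:batch-size} and \Cref{lem:batches-balanced}, apply Hoeffding to each of the three conditional means with accuracy $(\Delta/(2C))^{1/\alpha}$, pass through the $(C,\alpha)$-H\"older bound with weights $1,\tfrac12,\tfrac12$ to get total error $\Delta$, and union bound over the at most $3td$ estimates. The constant bookkeeping and the without-replacement remark match what the paper does (the latter is handled in \Cref{lem:batches-balanced}), so there is nothing to add.
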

\begin{proof}
   For any particular $\ell, i$, in order to compute $\estlocalgainS$ we need to estimate three expectations: 
   \[ \mathscr{G}(\E[f(\bx)], 
   \quad \mathscr{G}(\E[f(\bx)\mid \text{$\bx$ reaches $\ell$, $\bx_i =-1$}\,], 
   \quad 
   \mathscr{G}(\E[f(\bx)\mid \text{$\bx$ reaches $\ell$, $\bx_i =1$}\,].
   \] 
    Define $\eps_1, \eps_2, \eps_3$ to be the errors made in computing these expectations so that
    \begin{align*}
        \estlocalgainS(\ell,i) &\coloneqq \mathscr{G}(\E[\,f(\bx) \mid \text{$\bx$ reaches $\ell$}\,] + \eps_1)  \\ 
      &\quad -   \big(\lfrac1{2} \cdot \mathscr{G}(\E[\,f(\bx)\mid \text{$\bx$ reaches $\ell$, $\bx_i =-1$}\,] + \eps_2) \\
      &\quad\ \,  + \lfrac1{2} \cdot  \mathscr{G}(\E[\,f(\bx)\mid \text{$\bx$ reaches $\ell$, $\bx_i = 1$}\,] + \eps_3)\big).
    \end{align*}
    Suppose that $\eps_1, \eps_2, \eps_3$ are each bounded as
    \begin{align}
        \label{eq:eps small}
        |\eps_j| \leq \left(\frac{\Delta}{2C}\right)^\frac{1}{\alpha}.
    \end{align}
    Then, by the definition of H\"{o}lder continuous and triangle inequality,
    \begin{align*}
        |&\estlocalgainS(\ell,i) - \localgainf(\ell,i)|  \\
        &\leq |\mathscr{G}(\E[\,f(\bx) \mid \text{$\bx$ reaches $\ell$}\,] + \eps_1) - \mathscr{G}(\E[\,f(\bx) \mid \text{$\bx$ reaches $\ell$}\,])| \\
        &\quad + \frac{1}{2} |\mathscr{G}(\E[\,f(\bx)\mid \text{$\bx$ reaches $\ell$, $\bx_i =-1$}\,] + \eps_2) - \mathscr{G}(\E[\,f(\bx)\mid \text{$\bx$ reaches $\ell$, $\bx_i =-1$}\,] )| \\
        &\quad + \frac{1}{2} |\mathscr{G}(\E[\,f(\bx)\mid \text{$\bx$ reaches $\ell$, $\bx_i =1$}\,] + \eps_3) - \mathscr{G}(\E[\,f(\bx)\mid \text{$\bx$ reaches $\ell$, $\bx_i =1$}\,] )| \\
        &\leq C \cdot (|\eps_1|^\alpha + \frac{1}{2} \cdot |\eps_2|^\alpha + \frac{1}{2} \cdot |\eps_3|^\alpha) \\
        &\leq \Delta.
    \end{align*}

    Therefore, it is enough to show that for all $\ell, i$, the corresponding $\eps_1, \eps_2, \eps_3$ satisfy \Cref{eq:eps small}.    By~\Cref{lem:batch-size} and~\Cref{lem:batches-balanced}, with high probability all of these expectations are over at least  $\frac{b_\mathrm{min}}{4}$ terms. Given the above is true, we can use Hoeffding's inequality to bound each $\eps_j$,
    \begin{align*}
        \Pr\left[|\eps_j| > \left(\frac{\Delta}{2C}\right)^\frac{1}{\alpha} \right] \leq \exp_e\left(-2 \cdot \frac{b_\mathrm{min}}{4} \cdot \left(\frac{\Delta}{2C}\right)^{\frac{2}{\alpha}} \right).
    \end{align*}
    There are a total of at most $3td$ such $\eps_j$ we wish to bound. Setting $b_\mathrm{min}$ to at least
    \begin{align*}
        2\cdot \left(\frac{2C}{\Delta}\right)^{\frac{2}{\alpha}} \cdot \log_e\left(\frac{9td}{\delta}\right)
    \end{align*}
    means all are bounded as desired with probability at least $1 - \delta / 3$.
\end{proof}

\subsection{Properties of MiniBatchTopDown}
As we discussed in~\Cref{sec:upper bound mini batch overview}, a key component of~\cite{BLT3}'s analysis is a proof that if $\error_f(T^{\circ}_S) > \opt_s + \eps$, there must exist a leaf $\ell^\star \in T^\circ$ and a coordinate $i^\star\in [d]$ such that 
\begin{equation} \mathrm{PurityGain}_{\mathscr{G},f}(\ell^\star,i^\star)> \frac{\kappa\eps^2}{32j(\log s)^2}.\label{eq:score-lb}\end{equation} 
Based on how we set $\Delta$ in \Cref{lem:batch to Delta}, $\MiniBatchTopDown_{\mathscr{G}}$ will be able to estimate all local gains to additive accuracy $\pm O(\frac{\kappa\eps^2}{\log(s)^2})$. That accuracy, in conjunction with just \Cref{eq:score-lb}, is \emph{not} sufficient to prove that $\MiniBatchTopDown_{\mathscr{G}}$ will produce a low error tree. Instead, we need the following additional fact that \cite{BLT3} proved one step prior to showing \Cref{eq:score-lb}; in fact, it implies \Cref{eq:score-lb} but is stronger, and that strength is needed for our purposes.

\begin{fact}[Showed during the proof of Theorem 2 of \cite{BLT3}]
    \label{fact:total purity gain high}
    Let $T^\circ$ be any partial tree. For any $f: \bd \to \{0,1\}$ and $\kappa$-strongly concave impurity function $\mathscr{G}: [0,1] \to [0,1]$, if $\error_f(T^\circ_S) > \opt_s + \eps$, then 
    \begin{align*}
        \sum_{\text{leaves $\ell \in T^\circ$}} \max_{i \in [d]} \left( \puritygainf(\ell,i) \right) > \frac{\kappa}{32} \cdot \left(\frac{\eps}{\log s}\right)^2.
    \end{align*}
\end{fact}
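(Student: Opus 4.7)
My plan combines three ingredients: strong concavity to convert local gain into squared influence, an OSSS-type inequality for monotone functions relating max influence to variance, and a Cauchy--Schwarz aggregation across leaves anchored by the error hypothesis. First, for a fixed leaf $\ell$ with restriction $f_\ell$, bias $\mu_\ell$, and conditional biases $\mu_{\ell,\pm 1}$, the tower law gives $\mu_\ell = \tfrac{1}{2}(\mu_{\ell,-1}+\mu_{\ell,+1})$, so $\kappa$-strong concavity applied with $a = \mu_{\ell,-1}$, $b = \mu_{\ell,+1}$ yields $\mathrm{LocalGain}_{\mathscr{G},f}(\ell,i) \geq \tfrac{\kappa}{2}(\mu_{\ell,+1}-\mu_{\ell,-1})^2$. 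Monotonicity of $f_\ell$ implies $\mu_{\ell,+1}-\mu_{\ell,-1} = I_i(f_\ell) \geq 0$, so $\max_i \mathrm{LocalGain}_{\mathscr{G},f}(\ell,i) \geq \tfrac{\kappa}{2}\bigl(\max_i I_i(f_\ell)\bigr)^2$.

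Next, I invoke the OSSS inequality for monotone Boolean functions: any $\{0,1\}$-valued function computable by a decision tree of size $s$ has some coordinate with influence at least $\mathrm{Var}(\cdot)/\log_2 s$. Since $f_\ell$ need not itself have a small decision tree, I apply OSSS to $T^\ast_\ell$, the restriction of an optimal size-$s$ tree $T^\ast$ for $f$ (so $\sum_\ell 2^{-|\ell|}\mathrm{err}_\ell = \opt_s$, where $\mathrm{err}_\ell = \Pr[f_\ell \neq T^\ast_\ell]$), then transfer the bound back to $f_\ell$ via the Lipschitz estimates $|I_i(f_\ell)-I_i(T^\ast_\ell)| \leq 2\mathrm{err}_\ell$ and $|\mathrm{Var}(f_\ell)-\mathrm{Var}(T^\ast_\ell)| \leq \mathrm{err}_\ell$. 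This yields $\max_i I_i(f_\ell) \geq \mathrm{Var}(f_\ell)/\log_2 s - O(\mathrm{err}_\ell)$.

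Multiplying by $2^{-|\ell|}$, summing over leaves, and applying Cauchy--Schwarz in the form $\sum_\ell 2^{-|\ell|} u_\ell^2 \geq (\sum_\ell 2^{-|\ell|} u_\ell)^2$ (valid since $\sum_\ell 2^{-|\ell|}\leq 1$) reduces the problem to lower-bounding $\sum_\ell 2^{-|\ell|}[\mathrm{Var}(f_\ell)/\log s - O(\mathrm{err}_\ell)]$. Since $p(1-p)\geq \tfrac{1}{2}\min(p,1-p)$, the aggregate variance is at least $\tfrac{1}{2}\error_f(T^\circ_S) > \tfrac{1}{2}(\opt_s+\eps)$, while $\sum_\ell 2^{-|\ell|}\mathrm{err}_\ell = \opt_s$. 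A careful bookkeeping---comparing directly to the impurity drop $\Gimpurity(T^\circ)-\Gimpurity(T^\circ{\circ}T^\ast) = \Omega(\eps)$, where $T^\circ{\circ}T^\ast$ denotes the common refinement grafting $T^\ast$ at each leaf of $T^\circ$, bounded via the upper estimate $\mathscr{G}(p)\leq 2\min(p,1-p)$ and the strong-concavity lower bound on $\Gimpurity(T^\circ)$---shows the dominant contribution is $\Omega(\eps/\log s)$. Squaring and multiplying by $\kappa/2$ yields the claimed $\kappa\eps^2/(32\log^2 s)$ bound.

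\textbf{Main obstacle.} The chief difficulty is the OSSS-transfer step: a naive application accumulates an $O(\opt_s\log s)$ bookkeeping loss that could dominate the $\eps$ slack after squaring. The clean resolution is to bypass the crude leafwise Lipschitz transfer by working directly with the common refinement $T^\circ{\circ}T^\ast$: one first establishes the total impurity drop is $\Omega(\eps)$ via the $\mathscr{G}$-to-error sandwich, then charges this drop leafwise to the max-influence coordinate promised by OSSS, absorbing the accumulated $\opt_s$ terms into the leading constant. Everything else is routine calculation.
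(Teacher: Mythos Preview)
The paper does not prove this statement; it is quoted as a fact established within the proof of Theorem~2 of~\cite{BLT3}. Your high-level architecture---strong concavity to get $\mathrm{LocalGain}_{\mathscr{G},f}(\ell,i)\ge \tfrac{\kappa}{2}\,I_i(f_\ell)^2$ for monotone $f$, an OSSS-type bound to lower-bound $\max_i I_i(f_\ell)$ in terms of a variance/error quantity at the leaf, and Jensen across leaves---is indeed the skeleton of the~\cite{BLT3} argument.

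The gap is in your resolution of the ``main obstacle.'' Your sandwich step asserts $\mathscr{G}(p)\le 2\min(p,1-p)$, but for any concave impurity function with $\mathscr{G}(0)=\mathscr{G}(1)=0$ and $\mathscr{G}(\tfrac12)=1$ the inequality goes the \emph{other} way: concavity forces $\mathscr{G}$ to lie above the chord from $(0,0)$ to $(\tfrac12,1)$, so $\mathscr{G}(p)\ge 2\min(p,1-p)$ (with equality only for the tent function). Consequently you have no usable upper bound on $\Gimpurity_f(T^\circ{\circ}T^\ast)$, and the claimed $\Omega(\eps)$ impurity drop does not follow. Without it, the naive route through $\mathrm{Var}(f_\ell)\ge \tfrac12\min(\mu_\ell,1-\mu_\ell)$ leaves you with
\[
\sum_\ell 2^{-|\ell|}\max_i I_i(f_\ell)\ \ge\ \frac{1}{\log s}\Bigl(\tfrac12\,\error_f(T^\circ_S)-\opt_s\Bigr)\ \ge\ \frac{\eps-\opt_s}{2\log s},
\]
which is vacuous once $\opt_s\ge \eps$. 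This is exactly the loss you flagged, and your proposed fix does not close it.

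What actually works in~\cite{BLT3} is to avoid the variance-to-min conversion altogether: one applies the two-function OSSS inequality directly with the tree $T^\ast_\ell$ and the function $f_\ell$ to obtain a bound of the form $\max_i I_i(f_\ell)\cdot \log s \ge \min(\mu_\ell,1-\mu_\ell)-\mathrm{err}_\ell$ (up to constants), rather than the weaker $\mathrm{Var}(f_\ell)-\mathrm{err}_\ell$. Summing this over leaves gives $\sum_\ell 2^{-|\ell|}\max_i I_i(f_\ell)\ge \eps/(\text{const}\cdot\log s)$ with no $\opt_s$ leakage, after which your Jensen and strong-concavity steps finish the job. You should replace the $\Gimpurity$-drop sketch with this direct leafwise OSSS bound.
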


\Cref{fact:total purity gain high} implies \Cref{eq:score-lb} because, if $T^\circ$ is size $j$ and the total purity gains of all of its leaves is some value $z$, then at least one leaf has purity gain $\frac{z}{j}$. We use \Cref{fact:total purity gain high} to show that, whenever $\MiniBatchTopDown_{\mathscr{G}}$ picks a leaf that is neither too deep nor too high in the tree, it has picked a leaf and index with relatively large purity gain.

\begin{lemma}[Medium depth splits are good.]
    \label{lem:medium splits good}
    Choose any max depth $D \in \mathbb{N}$. Let $f: \bits^d \to \zo$ be a monotone target function and $\mathscr{G}$ be any $\kappa$-strongly concave and $(C, \alpha)$-H\"{o}lder continuous impurity function. For any $s \in \mathbb{N}$, $\eps, \delta \in (0,\frac{1}{2})$, let $t = s^{O(\log(s))/\eps^2}$, and $\bS$ be a set of $n$ labeled examples $(\bx, f(\bx))$ where $\bx \sim \bits^d$ is uniform random, 
    \begin{align*}
        n  = \Omega\left(\left(\frac{C^2 \log(s)^4} {\kappa^2  \eps^4} \right)^{\frac{1}{\alpha}} \cdot \log\left( \frac{td}{\delta} \right)\cdot 2^D\right)
    \end{align*}
    and
    \begin{align*}
        b = \Omega\left(\left(\frac{C^2 \log(s)^4}{\kappa^2  \eps^4}  \right)^{\frac{1}{\alpha}} \cdot \log\left( \frac{td}{\delta} \right)\right).
    \end{align*}    
    With probability at least $1 - \delta$, the following holds for all iterations of $\MiniBatchTopDown_{\mathscr{G}}(t, b, \bS)$. If, at iteration $j$, $T^\circ$ satisfies,
    \begin{align*}
        \error_f(T^\circ_{\Batch_b(S)}) \geq \opt_s + 2\eps,
    \end{align*}
   let $(\ell^\star, i^\star)$ be the leaf and coordinate chosen to maximize the $\estpuritygainS$. Then, if
    \begin{align*}
        \log(j) - 2 \leq |\,\ell^\star\,| \leq D,
    \end{align*}
    then
    \begin{align*}
        \puritygainf(\ell^\star, i^\star) >  \frac{\kappa}{64}\cdot\frac{\eps^2}{j (\log s)^2}.
    \end{align*}
\end{lemma}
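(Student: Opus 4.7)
The plan is to combine \Cref{lem:batch to Delta}, which provides uniform closeness of minibatch-estimated local gains to their true counterparts, with \Cref{fact:total purity gain high} and a max-vs-average argument that exploits the depth hypothesis $|\ell^\star|\ge \log j-2$. Setting $\Delta = \Theta(\kappa\eps^2/\log^2 s)$---consistent with the stated batch-size bound---\Cref{lem:batch to Delta} guarantees that, except on a $\delta$-event, every scored pair $(\ell,i)$ satisfies $|\mathrm{LocalGain}_{\mathscr{G},\bB}(\ell,i)-\localgainf(\ell,i)|\le \Delta$, and hence $|\mathrm{PurityGain}_{\mathscr{G},\bB}(\ell,i)-\puritygainf(\ell,i)|\le 2^{-|\ell|}\Delta$.

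Next I would translate the hypothesis $\error_f(T^\circ_{\Batch_b(S)})\ge \opt_s+2\eps$ into the premise $\error_f(T^\circ_S)>\opt_s+\eps$ needed by \Cref{fact:total purity gain high}. This is a leaf-by-leaf concentration argument comparing both completions to the Bayes-optimal completion: whenever $|\E[f\mid \ell]-\tfrac12|$ exceeds the estimation resolution $\approx (\Delta/C)^{1/\alpha}$, Hoeffding plus a union bound over the at most $t$ scored leaves shows that the minibatch label and the full-batch label both coincide with the Bayes label, so the two completions have identical error on $\ell$; on leaves of smaller conditional bias, the difference in error contribution under any labeling is at most twice that resolution, and summing against the weights $2^{-|\ell|}$ gives total disagreement $\ll \eps$. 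The $2\eps$-versus-$\eps$ slack in the hypothesis absorbs this loss, and \Cref{fact:total purity gain high} then yields
\[ \sum_{\ell\in T^\circ}\max_{i\in [d]}\puritygainf(\ell,i)\ \ge\ A,\qquad A\coloneqq \frac{\kappa\eps^2}{32\log^2 s}. \]

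I would then peel off leaves of depth greater than $D$: each contributes at most $2^{-D}=1/(t\log t)$ to the sum, and there are at most $t$ of them, giving total discarded mass $O(1/\log t)$. Choosing the implicit constant in $t=s^{\Theta(\log s)/\eps^2}$ (which may depend on $\kappa$) to be large enough ensures $1/\log t\le A/100$, so $\sum_{|\ell|\le D}\max_i\puritygainf(\ell,i)\ge 99A/100$. Since $(\ell^\star,i^\star)$ maximizes $\mathrm{PurityGain}_{\mathscr{G},\bB}$ over the at most $j$ leaves of depth $\le D$, the max-vs-average inequality together with $\sum_\ell 2^{-|\ell|}\le 1$ gives
\[ j\cdot \mathrm{PurityGain}_{\mathscr{G},\bB}(\ell^\star,i^\star)\ \ge\ \sum_{|\ell|\le D}\max_i\puritygainf(\ell,i)\ -\ \Delta\ \ge\ \tfrac{99A}{100}-\Delta. \]
Converting back via $|\ell^\star|\ge \log j-2$, which forces $2^{-|\ell^\star|}\le 4/j$, yields
\[ \puritygainf(\ell^\star,i^\star)\ \ge\ \frac{99A/100-\Delta}{j}-\frac{4\Delta}{j}\ =\ \frac{99A/100-5\Delta}{j}\ >\ \frac{A}{2j}\ =\ \frac{\kappa\eps^2}{64\, j\log^2 s}, \]
provided $\Delta$ is a sufficiently small constant multiple of $A$, which is achieved by tuning the constants hidden in the first paragraph.

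The main obstacle is the concentration step in the second paragraph: carefully comparing $\error_f(T^\circ_{\Batch_b(S)})$ to $\error_f(T^\circ_S)$ via the Bayes-optimal completion, so that \Cref{fact:total purity gain high} can be invoked. The remaining steps are routine averaging and linearity-of-estimation-error manipulations once the uniform guarantee of \Cref{lem:batch to Delta} is in place. Critically, the depth restriction $|\ell^\star|\ge\log j-2$ is used precisely once---to bound the conversion error on the picked pair itself by $O(\Delta/j)$, matching the target $\Omega(\kappa\eps^2/(j\log^2 s))$ scale of the conclusion; without this restriction, a shallow $\ell^\star$ could incur conversion error $\Omega(\Delta)$, which is $j$ times too large and would force $b$ to be polynomial in $t$ rather than polylogarithmic.
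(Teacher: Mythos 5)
Your proposal is correct and follows essentially the same route as the paper's proof: invoke \Cref{lem:batch to Delta} with $\Delta = \Theta(\kappa\eps^2/(\log s)^2)$, apply \Cref{fact:total purity gain high}, average the resulting lower bound over the at most $j$ leaves, and use $|\ell^\star|\ge\log j-2$ to convert the estimated purity gain of the chosen pair back to its true purity gain with error only $O(\Delta/j)$. If anything, you are more careful than the paper in two spots it treats tersely---the passage from $\error_f(T^\circ_{\Batch_b(S)})\ge\opt_s+2\eps$ to the premise $\error_f(T^\circ_S)>\opt_s+\eps$ of \Cref{fact:total purity gain high}, and the discarding of leaves deeper than $D$ from that fact's sum---so no changes are needed.
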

\begin{proof}
    %
    %

     For the values of $n$ and $b$ given in this lemma statement, using \Cref{lem:batch to Delta}, we have for $\Delta = \frac{\kappa}{32\cdot10}\cdot (\frac{\epsilon}{\log s})^2$, for all leaves with $|l| \leq D$
           \begin{align}
        |\estlocalgainS(\ell, i) - \localgainf(\ell, i)| \leq \frac{\kappa}{32\cdot10}\cdot \left(\frac{\epsilon}{\log s}\right)^2 \label{eqn:accurate estimates}
    \end{align} 
    with probability atleast $1-\delta$. 
    
   Since $ \error_f((T^\circ)_{\Batch_b(S)}) \geq \opt_s + 2\eps$ and using \cref{lem:batch-size}, we know that $ \error_f((T^\circ)_{S}) \geq \opt_s + \eps$ since the batch size is large enough, we can use \Cref{fact:total purity gain high} to lower bound the estimated purity gain of $\ell^\star$ and $i^\star$. Let $c = \frac{\kappa}{32}$.
    \begin{align*}
        &\sum_{\text{leaves $\ell \in T^\circ$}} \max_{i \in [d]} \left( \puritygainf(\ell,i) \right) > c \cdot \left(\frac{\eps}{\log s}\right)^2 \tag*{(\Cref{fact:total purity gain high})} \\
        &\sum_{\text{leaves $\ell \in T^\circ$}} 2 ^ {-|\ell|} \cdot \max_{i \in [d]} \left( \localgainf(\ell,i) \right) > c \cdot \left(\frac{\eps}{\log s}\right)^2\\
        &\sum_{\text{leaves $\ell \in T^\circ$}} 2 ^ {-|\ell|} \cdot \max_{i \in [d]} \left( \estlocalgainf(\ell,i) \right) > \frac{9c}{10} \cdot \left(\frac{\eps}{\log s}\right)^2 \tag*{(\Cref{eqn:accurate estimates} and $\sum_{\ell} 2^{-|\ell|} = 1$)}\\
         &\sum_{\text{leaves $\ell \in T^\circ$}}  \max_{i \in [d]} \left( \estpuritygainf(\ell,i) \right) > \frac{9c}{10} \cdot \left(\frac{\eps}{\log s}\right)^2. 
    \end{align*}
    Since there are $j$ leaves in $T^\circ$ and $\ell^\star, i^\star$ are chosen to maximize $\estpuritygainf(\ell^\star, i^\star)$,
    \begin{align*}
        \estpuritygainf(\ell^\star, i^\star) > \frac{9c}{10j} \cdot \left(\frac{\eps}{\log s}\right)^2.
    \end{align*}
    Next, we show that since $\ell^*$ is sufficiently far down in the tree, then the estimated purity gain and true purity gain are close.
    \begin{align*}
        |\estpuritygainf(\ell^\star, i^\star) -& \puritygainf(\ell^\star, i^\star)| 
        \\
        &= 2^{-|\ell^\star|} \cdot |\estlocalgainf(\ell^\star, i^\star) - \localgainf(\ell^\star, i^\star)| \\
        &\leq 2^{-|\ell^\star|} \cdot \frac{c}{10} \cdot \left(\frac{\eps}{\log s}\right)^2 \tag*{(\Cref{eqn:accurate estimates})}\\
        &\leq \frac{4}{j} \cdot \frac{c}{10} \cdot \left(\frac{\eps}{\log s}\right)^2. \tag*{($|\ell^\star| \geq \log(j) - 2$)}
    \end{align*}
    By triangle inequality, we have that $\puritygainf(\ell^\star, i^\star) > \frac{c}{2j} \cdot \left(\frac{\eps}{\log s}\right)^2$, the desired result.
    \end{proof}

    %
    

Given that we are only guaranteed to make good progress on splits that are neither too deep nor too shallow, we will need to deal with both possibilities. First, we show that if we ever wanted to make too deep a split, we would already be done. 

\begin{lemma}[Can stop at very large depth.]
    \label{lem:max depth}
    Let $f: \bits^d \to \zo$ be a monotone target function and $\mathscr{G}$ be any $\kappa$-strongly concave and $(C, \alpha)$-H\"{o}lder continuous impurity function. For any $s \in \mathbb{N}$, $\eps, \delta \in (0,\frac{1}{2})$, let 
    \begin{align}
        \label{eq:t definition}
        t = s^{\Theta(\log(s))/(\kappa \eps^2)},
    \end{align}
    set the max depth to
    \begin{align}
        \label{eq:max depth}
        D = \lfloor \log(t) +  \log \log t \rfloor,
    \end{align}
    let $\bS$ be a set of $n$ labeled examples $(\bx, f(\bx))$ where $\bx \sim \bits^d$ is uniform random,
    \begin{align*}
        n = \Omega\left(\left(\frac{C^2 \log(s)^4}{\kappa^2  \eps^4}  \right)^{\frac{1}{\alpha}} \log\left( \frac{td}{\delta} \right) \cdot 2^D \right)  = \poly_{\alpha, \kappa, C}(t, \log(d), \log(1/\delta)),
    \end{align*}
    and batch size at least
    \begin{align*}
        b = \Omega\left(\left(\frac{C^2 \log(s)^4}{\kappa^2  \eps^4}  \right)^{\frac{1}{\alpha}} \log\left( \frac{td}{\delta} \right)\right).
    \end{align*}
    Let $T_1^\circ$, $T_2^\circ, \ldots, T_t^\circ$ be the size $1,2, \ldots, t$ partials trees that $\MiniBatchTopDown_{\mathscr{G}}(t, b, \bS)$ builds. With probability $1 - \delta$ over the randomness of $\bS$ and the random batches, for any $k \in [t]$, if $T_k^\circ$ has depth more than $D$, then
    \begin{align}
        \label{eq:low error}
        \error_f((T_k^\circ)_{\Batch_b(S)}) \leq \opt_s + 2\eps.
    \end{align}

\end{lemma}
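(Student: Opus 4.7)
Suppose for contradiction that $\error_f((T_k^\circ)_{\Batch_b(\bS)}) > \opt_s + 2\eps$. Let $j \le k$ be the smallest iteration index for which $T_j^\circ$ has depth exceeding $D$. By minimality, every leaf of $T_{j-1}^\circ$ has depth at most $D$, and the split at iteration $j$ must be on a leaf $\ell^\star \in T_{j-1}^\circ$ of depth exactly $D$ (only such a split can push the depth past $D$). Consequently $\estpuritygainf(\ell^\star, i^\star) = 2^{-D}\cdot\estlocalgainf(\ell^\star,i^\star) \le 2^{-D}$, since local gain is at most $\mathscr{G}(\E[f])\le 1$. Because $(\ell^\star,i^\star)$ maximizes the estimated purity gain across all leaves of depth $\le D$, the bound $\estpuritygainf(\ell,i)\le 2^{-D}$ holds uniformly over $T_{j-1}^\circ$.

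Invoking \Cref{lem:batch to Delta} with $\Delta = \Theta(\kappa\eps^2/\log^2 s)$---whose hypotheses on $n$ and $b$ match those imposed here---gives $|\estlocalgainf(\ell,i) - \localgainf(\ell,i)| \le \Delta$ at every leaf of $T_{j-1}^\circ$, so $\puritygainf(\ell,i) \le \estpuritygainf(\ell,i) + 2^{-|\ell|}\Delta \le 2^{-D} + 2^{-|\ell|}\Delta$. Summing the per-leaf maximum over the at most $t$ leaves of $T_{j-1}^\circ$ and using $\sum_\ell 2^{-|\ell|}\le 1$,
\[
\sum_{\ell \in T_{j-1}^\circ} \max_i \puritygainf(\ell,i) \;\le\; t\cdot 2^{-D} + \Delta \;=\; \frac{1}{\log t} + \Delta.
\]
The choice $t = s^{\Theta(\log s)/(\kappa\eps^2)}$ drives $1/\log t$ strictly below $\tfrac{\kappa}{64}(\eps/\log s)^2$, so with a sufficiently small $\Delta$ the right-hand side is less than $\tfrac{\kappa}{32}(\eps/\log s)^2$. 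The contrapositive of \Cref{fact:total purity gain high} then yields $\error_f((T_{j-1}^\circ)_{\bS}) \le \opt_s + \eps$.

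To transfer this bound from $T_{j-1}^\circ$ to $T_k^\circ$, observe that $T_k^\circ$ is a refinement of $T_{j-1}^\circ$, and that majority-on-$\bS$ is the optimal per-leaf labeling for training error on $\bS$, so refinement is training-error-monotone: $\error_{\bS}((T_k^\circ)_{\bS}) \le \error_{\bS}((T_{j-1}^\circ)_{\bS})$. The sample size $n$ is large enough that a union bound over the $d^{O(t)}$ partial trees of size $\le t$ delivers $|\error_f(T)-\error_{\bS}(T)|\le \eps/4$ for all such $T$ simultaneously, and thus $\error_f((T_k^\circ)_{\bS}) \le \opt_s + \tfrac{3\eps}{2}$. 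Finally, $b$ is chosen so that a Chernoff bound on each leaf's completion label, union-bounded over the $\le t$ leaves, yields $\error_f((T_k^\circ)_{\Batch_b(\bS)}) \le \error_f((T_k^\circ)_{\bS}) + \eps/2$. Chaining these estimates contradicts the assumption that $\error_f((T_k^\circ)_{\Batch_b(\bS)}) > \opt_s + 2\eps$.

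The main obstacle is precisely this transition from $T_{j-1}^\circ$ to $T_k^\circ$: the tree $T_k^\circ$ carries frozen depth-$(D{+}1)$ leaves that the algorithm never scored, so \Cref{fact:total purity gain high} cannot be invoked on $T_k^\circ$ directly. Passing through $T_{j-1}^\circ$ and absorbing the discrepancy via the two concentration steps (uniform convergence across size-$\le t$ trees, then per-leaf Chernoff bounds for the batch completion) is what accounts for the constant slack between the $\opt_s + \eps$ we prove at $T_{j-1}^\circ$ and the $\opt_s + 2\eps$ claimed for $T_k^\circ$.
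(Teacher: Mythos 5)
Your proof is correct and follows essentially the same route as the paper's: identify the first over-deep split, observe that its purity gain is at most $2^{-D}\approx 1/(t\log t)$ so that (after converting estimated to true gains via \Cref{lem:batch to Delta}) the total purity gain over the leaves falls below the threshold of \Cref{fact:total purity gain high}---the paper packages this same comparison through \Cref{lem:medium splits good}, which is itself derived from that fact---and conclude that the error was already at most $\opt_s+\eps$ before the deep split occurred. The only notable difference is that you supply an explicit justification (training-error monotonicity of refinement, uniform convergence over size-$\le t$ trees, and a per-leaf Chernoff bound for the minibatch completion) for the final transfer from $T_{j-1}^\circ$ to $T_k^\circ$, a step the paper simply asserts as ``adding splits can only increase error by at most $\eps$.''
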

\begin{proof}
    
Let $k$ be chosen so that $T_k^\circ$ has depth more than $D$. For some $j \leq k$, there was a leaf $\ell^\star \in T_j^\circ$ that was split, satisfying,
\begin{align*}
    |\ell^\star| &= \lfloor \log(t) +\log \log t \rfloor - 1 \\
    &= \lfloor \log(t) +  \log\left(\Theta\left((\log s)^2 / (\kappa \eps^2)\right)\right) \rfloor - 1.
\end{align*}
For any $i \in [d]$,
\begin{align}
    \label{eq:purity gain limited}
    \puritygainf(\ell^\star, i) &= 2^{-|\ell^\star|} \nonumber \localgainf(\ell^\star, i) \\&\leq 2^{-|\ell^\star|}\nonumber \\
    &\leq \frac{1}{t} \cdot \Theta\left(\left(\frac{(\log s)^2}{\kappa\eps^2} \right)^{-1}\right)\nonumber\\
    &\leq \frac{1}{j} \cdot \Theta\left(\frac{\kappa\eps^2}{ (\log s)^2} \right).
\end{align}
Note that the constant in \Cref{eq:purity gain limited} is inversely related to the constant in the exponent of \Cref{eq:t definition}. In \Cref{lem:medium splits good}, we showed that if $\error_f((T_j^\circ)_{\Batch_b(S)}) \geq \opt_s + \eps$, then for some $i^\star \in [d]$,
\begin{align}
    \label{eq: purity gain high}
    \puritygainf(\ell^\star, i^\star) = \Omega\left(\frac{\kappa \eps^2}{j (\log s)^2} \right).
\end{align}
If we choose the constant in \Cref{eq:purity gain limited} sufficiently low, which can be done by making the constant in \Cref{eq:t definition} sufficiently high, then that equation can not be satisfied at the same time as \Cref{eq: purity gain high}. Therefore, it must be that $\error_f((T_j^\circ)_{\Batch_b(S)}) < \opt_s + \eps$. Since $j < k$, and adding splits can only increase error by atmost $\epsilon$, it must also be the case that $\error_f((T_k^\circ)_{\Batch_b(S)}) < \opt_s + 2\eps$.
\end{proof}

We next show that before \Cref{lem:max depth} kicks in, most splits are sufficiently deep to make good progress.

\begin{lemma}[Few splits are shallow]
    \label{lemma:few splits shallow}
    Let $k = 2^a$ be any power of $2$ and $T_1^\circ,\ldots,T_{k}^\circ$ be a series of bare trees of size $1, \ldots, k$ respectively where $T_{j+1}$ is formed by splitting $\ell_j \in T_j$. Then,
    \begin{align*}
        \sum_{j = 1}^{k} \Ind\big[\,|\ell_j| < \log(j) - 2\,\big] \leq \frac{k}{4}.
    \end{align*}
\end{lemma}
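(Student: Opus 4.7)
The plan is a direct counting argument by depth, using the elementary fact that any binary tree has at most $2^d$ nodes at depth $d$. First, observe that the splits that build $T_k^\circ$ from $T_1^\circ$ are in bijection with the internal nodes of $T_k^\circ$: the leaf $\ell_j$ split at step $j$ becomes an internal node of $T_k^\circ$ at depth $|\ell_j|$. Consequently, letting $f(d) := |\{j \in [k-1] : |\ell_j| = d\}|$ denote the number of splits performed at depth $d$, we have $f(d) \le 2^d$, since the $2^d$-cap on depth-$d$ nodes applies in particular to the internal ones.

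Next, I would rewrite the shallowness condition $|\ell_j| < \log j - 2$ in its equivalent exponential form $j > 4 \cdot 2^{|\ell_j|}$. So a split at depth $d$ is shallow iff it occurs at a step $j > 4 \cdot 2^d$. Since $j \le k - 1 < 2^a$, this is only achievable when $4 \cdot 2^d < 2^a$, i.e., when $d \le a - 3$; splits at depths $d \ge a - 2$ cannot be shallow at all.

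Summing the shallow contributions by depth then yields
\[
\sum_{j=1}^{k-1} \Ind\big[\,|\ell_j| < \log j - 2\,\big] \;=\; \sum_{d \ge 0} \bigl|\{j : |\ell_j| = d,\ j > 4 \cdot 2^d\}\bigr| \;\le\; \sum_{d=0}^{a-3} f(d) \;\le\; \sum_{d=0}^{a-3} 2^d \;=\; 2^{a-2} - 1 \;<\; k/4,
\]
which gives the claim (any $j = k$ term in the original sum, if intended, is absorbed into the strict inequality). There is no substantive obstacle here: once the shallow condition is rephrased as $j > 4 \cdot 2^{|\ell_j|}$, the $2^d$-bound on depth-$d$ internal nodes immediately delivers the bound, and the geometric sum $\sum_{d=0}^{a-3} 2^d = k/4 - 1$ matches the target $k/4$ exactly, indicating that the constant $2$ appearing in the definition of shallowness is calibrated precisely for this counting.
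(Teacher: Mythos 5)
Your proof is correct and is essentially the same argument as the paper's: both reduce the shallowness condition to the statement that the split node lies at depth less than $a-2$ in $T_k^\circ$, and then count internal nodes at those depths, obtaining $\sum_{d=0}^{a-3} 2^d = 2^{a-2}-1 \le k/4$. Your per-depth bookkeeping via $f(d)\le 2^d$ is just a slightly more granular phrasing of the paper's single count of all nodes above depth $a-2$.
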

\begin{proof}
    First, since for all $j = 1,\ldots, k$, $j \leq k$, we can bound,
    \begin{align*}
        \sum_{j = 1}^{k} \Ind\big[\,|\ell_j| < \log(j) - 2\,\big] \leq  \sum_{j = 1}^{k} \Ind\big[\,|\ell_j| < \log(k) - 2\,\big] = \sum_{j = 1}^k \Ind\big[\,|\ell_j| < a - 2\,\big].
    \end{align*}
    If $\ell_j$, a leaf of $T_j^\circ$, has depth less than $a - 2$, then it is also an internal node of $T_{k}^\circ$ with depth less than $a - 2$. There are at most $2^{a - 2} - 1$ nodes in any tree of depth less than $a - 2$. Therefore,
    \begin{equation*}
        \sum_{j = 1}^k \Ind[\,|\ell_j| \leq a - 2\,] \leq 2^{a - 2} - 1 \leq \frac{k}{4} - 1 \leq \frac{k}{4}. \qedhere
    \end{equation*} 
\end{proof}

\begin{proof}[Proof of~\Cref{thm:upper bound mini batch formal}]

$\MiniBatchTopDown_{\mathscr{G}}$ builds a series of bare trees, $T_1^\circ, T_2^\circ, \ldots, T_{t}^\circ$, where $T_j$ has size $j$. We wish to prove that $\error_f((T_{t}^\circ)_{\Batch_b(S)}) \leq \opt_s + 3\eps$ (In the end, we can choose $\eps$ appropriately to get error $\opt_s + \eps$). To do so, we consider two cases.\medskip 

\noindent \textbf{Case 1}: There is some $k < t$ for which $\error_f((T_{k}^\circ)_{\Batch_b(S)}) \leq \opt_s + 2\eps$.\\
Since splitting more variables of $T_k$ can only increase it's error by at most $\epsilon$,
\begin{align*}
    \error_f((T_{t}^\circ)_{\Batch_b(S)}) \leq \error_f( (T_{k}^\circ)_{\Batch_b(S)}) + \eps \leq \opt_s + 3\eps,
\end{align*}
which is the desired result.

\medskip 
\noindent \textbf{Case 2}: There is no $k < {t}$ for which $\error_f((T_{k}^\circ)_{\Batch_b(S)}) \leq \opt_s + \eps$. \\
In this case, we use \Cref{lem:medium splits good} to ensure we make good progress. \Cref{lem:medium splits good} only applies when the tree has depth at most $D = \log t + \log \log t$. Luckily, \Cref{lem:max depth} ensures that if the tree has depth more than $D$, then we are ensured that $\error_f((T_{t}^\circ)_{\Batch_b(S)}) \leq \opt_s + 2\eps$, and so are done. For the remainder of this proof, we assume all partial trees have depth at most $D$.

We will show that $\Gimpurity(T_{t}^\circ) = 0$, which means that $\error_f((T_{t}^\circ)_{\Batch_b(S)}) = 0 \leq \opt_s + 2\eps$, also proving the desired result. For $j = 1,\ldots, t - 1$, let $\ell_j$ be the leaf of $T_j$ that is split, and $i_j$ be the coordinate placed at $\ell_j$ to form $T_{j+1}$. Then,
\begin{align*}
    \Gimpurity(T_{t}^\circ) = \Gimpurity(T_{1}^\circ) - \sum_{j = 1}^{t-1} \puritygainf(\ell_j, i_j) .
\end{align*}
Since $\Gimpurity(T_{1}^\circ) \leq 1$ and our goal is to show that $\Gimpurity(T_{t+1}^\circ) = 0$, it is sufficient to show that $\sum_{j = 1}^t \puritygainf(\ell_j, i_j) \geq 1$. \Cref{lem:medium splits good} combined with \Cref{fact:total purity gain high},
\begin{align*}
    \sum_{j = 1}^t \puritygainf(\ell_j, i_j) \geq  \sum_{j = 1}^t \Ind[\,|\ell_j| \geq \log(j) - 2\,] \cdot \frac{\kappa}{64j} \cdot \left(\frac{\eps}{\log s}\right)^2.
\end{align*}
We break the above summation into chunks from $j = (2^{a} + 1)$ to $j = 2^{a+1}$, integer $a \leq \log(t)$. In such a chunk, there are $2^a$ choices for $j$. By \Cref{lemma:few splits shallow}, we know that for at most $2^{a+1} / 4 = 2^a / 2$ of those $j$ is $\Ind[\,|\ell_j| < \log(j) - 2\,]$. Therefore,

\begin{align*}
    \sum_{j = 2^{a} + 1}^{2^{a+1}} \Ind[\,|\ell_j| \leq \log(j) - 2\,] \cdot \frac{\kappa}{64j} \cdot \left(\frac{\eps}{\log s}\right)^2 &\geq \frac{2^a}{2} \cdot \frac{\kappa}{64 \cdot (2^{a+1})} \cdot \left(\frac{\eps}{\log s}\right)^2 \\
    &=\frac{\kappa}{256} \cdot \left(\frac{\eps}{\log s}\right)^2.
\end{align*}
Summing up $\frac{256}{\kappa} \cdot \left(\frac{\log s}{\eps}\right)^2$ such chunks gives a sum of at least $1$. Therefore, for
\begin{align*}
    t = \exp \left(\Omega\left(\frac{(\log s)^2}{\kappa \eps^2}  \right) \right)
\end{align*}
it must be the case that $\Gimpurity(T_{t+1}^\circ) = 0$, proving the desired result.
\end{proof}
%
%
\section{Proofs of \Cref{thm:local learner,thm:estimate-learn}}
We begin with a proof overview for~\Cref{thm:local learner}.  Let $T$ be the decision tree hypothesis that $\MiniBatchTopDown_{\mathscr{G}}$ would construct if we were to all of $S^\circ$ and train $\MiniBatchTopDown_{\mathscr{G}}$ on it. Our goal is to efficiently compute $T(x^\star)$ for a given $x^\star$ by selectively labeling only $q$ points within $S^\circ$, where $q$ is exponentially smaller than the sample complexity of learning and constructing~$T$. 

Intuitively, we would like $\LocalLearner_{\mathscr{G}}$ to only grow the single ``strand'' within $T$ required to compute $T(x^\star)$ instead of the entire tree $T$---this ``strand'' is simply the root-to-leaf path of $T$ that $x^\star$ follows.  The key challenge that arises in implementing this plan is: how does $\LocalLearner_{\mathscr{G}}$ know when to terminate this strand (i.e.~how does it know when it has reached a leaf of $T$)?  $\MiniBatchTopDown_{\mathscr{G}}$, the ``global'' algorithm that $\LocalLearner_{\mathscr{G}}$ is trying the simulate, terminates when the tree is of size $t$.  As $\LocalLearner_{\mathscr{G}}$ grows the strand corresponding to $x^\star$, how could it estimate the size of the overall tree without actually growing it?  In other words, it is not clear how one would define the stopping criterion of the while loop in the following pseudocode: 

\begin{figure}[H]
  \captionsetup{width=.9\linewidth}
\begin{tcolorbox}[colback = white,arc=1mm, boxrule=0.25mm,left=20pt]

 Initialize $\ell$ to be the leaf of the empty tree. \vspace{4pt} 
 
  while (\textit{stopping criterion}) \{  \\

\vspace{-12pt}
\begin{enumerate}
\item Draw $\bB^\circ \sim \Batch_b(S^\circ,\ell)$ and query $f$'s values on these points.  Let $i^\star$ be the coordinate that maximizes $\mathrm{PurityGain}_{\mathscr{G},\bB^\circ}(\ell, i)$ among all $i\in [d]$.
\item Extend $\ell$ according to the value of $x^\star_{i^\star}$. 

\end{enumerate}
\vspace{-8pt}

\ \ \}  \vspace{4pt}

 Draw $\bB^\circ \sim \Batch_b(S,\ell)$ and query $f$'s values on these points. \vspace{4pt} \\
 Output $\mathrm{round}(\E_{\bx\sim\bB^\circ}[f(\bx)])$.

\end{tcolorbox}
\label{fig:informal single strand local learner}
\end{figure}
Roughly speaking, we want ``\textit{stopping criterion}'' to answer the following question: if we grew a size-$t$ tree using $\MiniBatchTopDown_{\mathscr{G}}$ (on the labeled version of $S^\circ$), would $\ell$ be a leaf of the resulting tree, or would it be an internal node? Nearly equivalently, with access to just a single strand of a tree, we wish to estimate the size of that tree. If that size is $t$, then we stop the while loop. 

It is not possible to accurately estimate the size of a tree using just a single strand. However, by computing a small number of random strands, we can get an accurate size estimator. In~\Cref{section:size estimator}, we show that for $\bx_1, \ldots, \bx_m$ chosen uniformly at random from $\bits^d$, the estimator
$    \boldsymbol{e} \coloneqq \frac{1}{m} \sum_{i=1}^m 2^{|\ell_T(\bx_i)|}
$
accurately estimates the size of  $T$, as long as the depth of $T$ is not too large. Therefore, rather than growing only the root-to-leaf path for $x^\star$, $\LocalLearner_{\mathscr{G}}$ samples random additional inputs, $\bx_1, \ldots, \bx_m$. Then, it simultaneously grows the strands for the root-to-leaf paths of $x^\star$ as well as $\bx_1, \ldots, \bx_m$. These strands do not all grow at the same ``rate'', as we want $\LocalLearner_{\mathscr{G}}$ to make splits in the same order as $\MiniBatchTopDown_{\mathscr{G}}$ does. As long as it does this, we can use the size estimator to, at any step, accurately estimate the size of tree $\MiniBatchTopDown_{\mathscr{G}}$ would need to build for all the current strands to end at leaves. $\LocalLearner_{\mathscr{G}}$ terminates when its estimate of this size is $t$.

\begin{figure}[h!]
\centering
  \includegraphics[width=0.6\textwidth, angle=0]{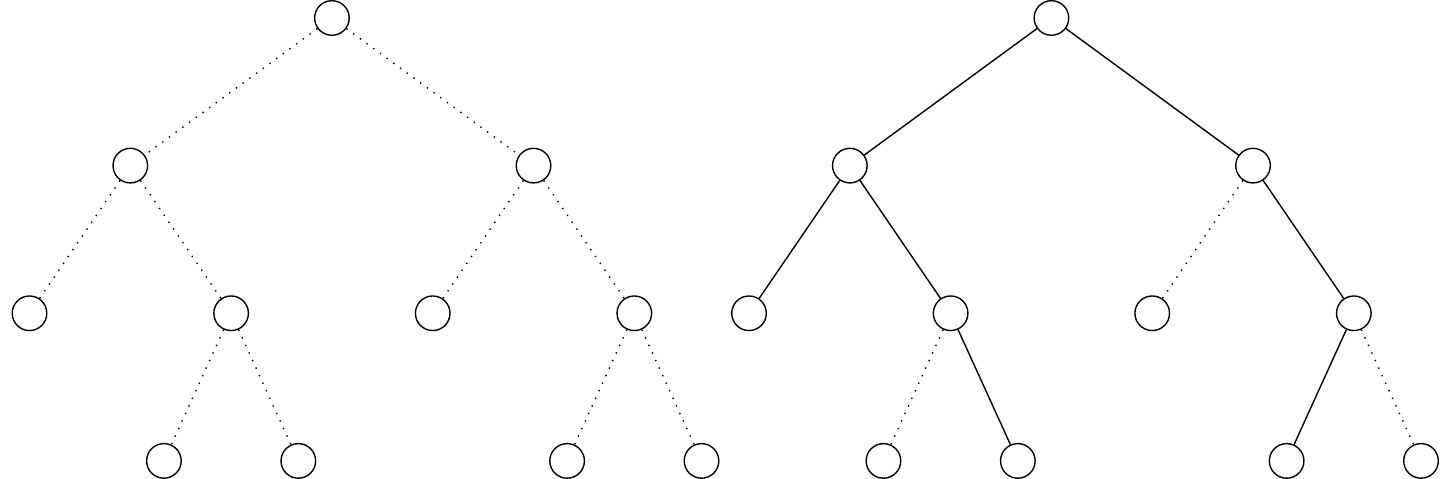}
    \caption{Rather than growing the entire tree $T$ (depcited on the LHS) as $\MiniBatchTopDown_{\mathscr{G}}$ does, $\LocalLearner_{\mathscr{G}}$ only grows $m+1$ strands within $T$ (depicted on the RHS), corresponding to the given input $x^\star$ and $m$ additional  random inputs $\bx_1,...,\bx_m \sim\bits^d$.}
\end{figure}

We back the above intuition for $\LocalLearner_{\mathscr{G}}$ with proofs.
In~\Cref{sec:local-learner}, 
we show that the output of $\LocalLearner_{\mathscr{G}}$ for size parameter $t$  is $T(x^\star)$, where $T$ is size-$t'$ tree produced by $\MiniBatchTopDown_{\mathscr{G}}$ where $t \in t'(1\pm \eta)$.  We also show that $\LocalLearner_{\mathscr{G}}$ needs to only label polylogarithmic many points within $S^\circ$ to compute $T(x^*)$. This completes our proof overview for~\Cref{thm:local learner}, and \Cref{thm:estimate-learn} is a straightforward consequence of~\Cref{thm:local learner}. 

\subsection{Estimating the size of a decision tree}
\label{section:size estimator}

In this section, we design a decision tree size estimator. This size estimator only needs to inspect a small number of random strands from the decision tree. It is unbiased, and as long as the decision tree has a bounded max depth, obeys concentration bounds shown in \Cref{lemma:size estimator}.

\begin{lemma}[Size estimator]
    \label{lemma:size estimator}
    For any $\Delta, \delta > 0$ and size-$s$ decision tree $T$, let $\ell^\star$ be the deepest leaf in $T$ and
    \begin{align*}
        m = \frac{(2^{|\ell^\star|})^2}{2 \Delta^2} \cdot \ln\left(\frac{2}{\delta}\right).
    \end{align*}
    Choose $\bx_1,\ldots, \bx_m$ uniformly random from $\bd$ and define the estimator
    \begin{align*}
        e \coloneqq \frac{1}{m} \sum_{i=1}^m 2^{|\ell_{T}(\bx_i)|}.
    \end{align*}
    With probability at least $1 - \delta$,
    \begin{align*}
        |e - s| \leq \Delta.
    \end{align*}
\end{lemma}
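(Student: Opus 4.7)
\medskip

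\noindent\textbf{Proof proposal.} The plan is a short two-step argument: verify that $e$ is an unbiased estimator of $s$, and then apply Hoeffding's inequality to the i.i.d.\ sum $\sum_i 2^{|\ell_T(\bx_i)|}$.

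First, I would compute $\E[2^{|\ell_T(\bx)|}]$ for a single uniformly random $\bx \sim \bits^d$. Since $T$ queries each coordinate along a root-to-leaf path, a uniformly random input reaches leaf $\ell$ with probability exactly $2^{-|\ell|}$. Therefore
\[
\E\!\left[2^{|\ell_T(\bx)|}\right] \;=\; \sum_{\text{leaves } \ell \in T} 2^{-|\ell|} \cdot 2^{|\ell|} \;=\; \sum_{\text{leaves } \ell \in T} 1 \;=\; s,
\]
so $\E[e] = s$ by linearity.

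Next, each random variable $\bY_i \coloneqq 2^{|\ell_T(\bx_i)|}$ is bounded: clearly $1 \le \bY_i \le 2^{|\ell^\star|}$, since $\ell^\star$ is the deepest leaf of $T$. The $\bY_i$ are i.i.d.\ because the $\bx_i$ are, so Hoeffding's inequality yields
\[
\Pr\!\left[\,\big| e - s \big| \ge \Delta\,\right] \;\le\; 2\exp\!\left(-\frac{2 m \Delta^2}{(2^{|\ell^\star|} - 1)^2}\right) \;\le\; 2\exp\!\left(-\frac{2 m \Delta^2}{(2^{|\ell^\star|})^2}\right).
\]
Plugging in the choice $m = \frac{(2^{|\ell^\star|})^2}{2 \Delta^2} \cdot \ln(2/\delta)$ makes the right-hand side at most $\delta$, which is exactly the claimed tail bound.

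The argument is essentially routine once one notices the unbiasedness identity; there is no real obstacle. The only point worth flagging is the dependence on $2^{|\ell^\star|}$ in the sample complexity: Hoeffding pays quadratically in the range of the summands, which is why controlling the max depth of $T$ (enforced by the depth cap $D = \log t + \log\log t$ in $\MiniBatchTopDown_{\mathscr{G}}$ and $\LocalLearner_{\mathscr{G}}$) is what makes $m$ polynomial in $t$ rather than exponential. This is exactly the reason the max-depth cap was introduced earlier, so the lemma slots cleanly into the active local learning analysis.
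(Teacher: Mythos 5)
Your proposal is correct and matches the paper's proof essentially line for line: the same unbiasedness computation $\E[2^{|\ell_T(\bx)|}] = \sum_\ell 2^{-|\ell|}\cdot 2^{|\ell|} = s$, followed by Hoeffding's inequality with the summands bounded by $2^{|\ell^\star|}$. Your use of the slightly tighter range $[1, 2^{|\ell^\star|}]$ rather than $[0, 2^{|\ell^\star|}]$ is an immaterial refinement.
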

\begin{proof}
    We first show that $\Ex[e] = s$. 
    \begin{align*}
        \Ex[e] &= \Ex_{\bx \sim \bd} \left[2^{|\ell_T(\bx)|}\right] \\
        &= \sum_{\text{leaves $\ell \in T$}} \Pr[\text{$\bx$ reaches $\ell$}] \cdot 2^{|\ell|} \\
        &= \sum_{\text{leaves $\ell \in T$}} \frac{1}{2^{|\ell|}} \cdot 2^{|\ell|} \\
        &= s,
    \end{align*}
    where the last equality is due to the fact that a size-$s$ tree has $s$ leaves. Furthermore, $e$ is the sum of $m$ independent random variables bounded between $0$ and $2^{|\ell^\star|}$. Therefore, we can apply Hoeffding's inequality,
    \begin{align*}
        \Pr[|e - s| \geq \Delta] \leq 2\exp_e\left(-\frac{2 m \Delta^2}{(2^{|\ell^\star|})^2}\right).
    \end{align*}
    Plugging in $m$ proves the desired result.
\end{proof}
\subsection{Provable guarantees for $\LocalLearner$}
\label{sec:local-learner}
To facilitate comparisons between the output of $\LocalLearner_{\mathscr{G}}$ and $\MiniBatchTopDown_{\mathscr{G}}$, we will define another algorithm, $\TopDownSizeEstimate_{\mathscr{G}}$ (\Cref{fig:TopDownSizeEstimate}), that shares some elements with $\LocalLearner_{\mathscr{G}}$ and some elements with $\MiniBatchTopDown_{\mathscr{G}}$.

\begin{figure}[htb!]
  \captionsetup{width=.9\linewidth}
\begin{tcolorbox}[colback = white,arc=1mm, boxrule=0.25mm,left=20pt]
\vspace{3pt} 

\hspace{-10pt}$\TopDownSizeEstimate_{\mathscr{G}}$($t,b,S$):  \vspace{6pt} 

Initialize $T^\circ$ to be the empty tree. \vspace{4pt} 
 
Define $D \coloneqq \log t + \log\log t$.\vspace{4pt}

Let $\bB^\circ_{\mathrm{strands}}$ be $b$ uniform random points from $\bits^d$. \vspace{4pt} 

Initialize $e \coloneqq 1$, our size estimate. \vspace{4pt} 

 while ($e < t$) \{  \\

\vspace{-10pt}
\begin{enumerate}[leftmargin=20pt]
\item \hspace{-5pt} {\sl Score:}  For each leaf $\ell \in T^\circ$ of depth at most $D$, draw $\bB \sim \Batch_b(S,\ell)$. For each coordinate $i\in [d]$,  compute:
\begin{align*}  
\mathrm{PurityGain}_{\mathscr{G},\bB}(\ell,i) &\coloneqq 2^{-|\ell|} \cdot \mathrm{LocalGain}_{\mathscr{G},\bB} (\ell,i), \text{ where}\\
  \mathrm{LocalGain}_{\mathscr{G},\bB}(\ell,i) &\coloneqq \mathscr{G}(\E[f(\bx)])  \\ 
  &\quad -   \big(\lfrac1{2} \, \mathscr{G}(\E[\,f(\bx)\mid \text{$\bx_i =-1$}\,]) + \lfrac1{2} \, \mathscr{G}(\E[\,f(\bx)\mid \text{$\bx_i = 1$}\,])\big),
\end{align*}
where the expectations are with respect to $(\bx,f(\bx))\sim \bB$.
\item {\sl Split:} Let $(\ell^\star,i^\star)$ be the tuple that maximizes $\mathrm{PurityGain}_{\mathscr{G},\bB}(\ell,i)$.  Grow $T^\circ$ by splitting $\ell^\star$ with a query to $x_{i^\star}$.
\item {\sl Estimate size:} Update our size estimate to
\begin{align*}
    e = \Ex_{\bx \in \bB^\circ_{\mathrm{strands}}}[2^{|\ell_{T^{\circ}}(\bx)|}]
\end{align*}
\end{enumerate}
\vspace{-18pt}

\ \ \}  \vspace{4pt}

For each leaf $\ell \in T^\circ$, draw $\bB \sim \Batch_b(S,\ell)$ and label $\ell$ with $\mathrm{round}(\E_{(\bx,f(\bx))\sim\bB}[f(\bx)])$.

\vspace{3pt}
\end{tcolorbox}
\caption{$\TopDownSizeEstimate_{\mathscr{G}}$ takes as input a size parameter $t$, a minibatch size $b$, and a labeled dataset $S$.  It outputs a size-$t'$ decision tree hypothesis for $f$, where $t'$ is close to $t$.}
\label{fig:TopDownSizeEstimate}
\end{figure}

\paragraph{Comparison between $\MiniBatchTopDown_{\mathscr{G}}$ and $\TopDownSizeEstimate_{\mathscr{G}}$:} The only difference between $\MiniBatchTopDown_{\mathscr{G}}$ and $\TopDownSizeEstimate_{\mathscr{G}}$ is the stopping criterion. $\MiniBatchTopDown_{\mathscr{G}}$ stops when the size of $T^\circ$ is exactly $t$, whereas $\TopDownSizeEstimate_{\mathscr{G}}$ estimates the size of $T^\circ$ using the estimator from \Cref{section:size estimator} and stops when this size estimate is at least $t$.

\paragraph{Comparison between $\LocalLearner_{\mathscr{G}}$ and $\MiniBatchTopDown_{\mathscr{G}}$:} For any $t, S, b, x^\star$ that are valid inputs to $\LocalLearner_{\mathscr{G}}$, we compare the following two procedures.
\begin{enumerate}
    \item Running $\TopDownSizeEstimate_{\mathscr{G}}(t, b, S)$ to get a decision tree, $T$, and then computing $T(x^\star)$.
    \item Only running $\LocalLearner_{\mathscr{G}}(t, b, S, x^\star)$.
\end{enumerate}
We claim the output from the above two procedures is identical (given \Cref{footnote:assumption}). To see this, we first observe that $\TopDownSizeEstimate_{\mathscr{G}}$ expands all paths in the tree its building, whereas $\LocalLearner_{\mathscr{G}}$ only expands paths that are pertinent to either the input $x^\star$, or inputs in $\bB_{\mathrm{strands}}^\circ$, which are used to compute the size estimate. Aside from that, both of the above procedures are identical. Furthermore, paths not containing $x^\star$ nor any inputs in $\bB_{\mathrm{strands}}^\circ$ have no effect on how the tree eventually labels $x^\star$. Therefore, the output of the two above procedures is identical, though $\LocalLearner_{\mathscr{G}}$ is more efficient as it only computes necessary paths.

Combining the above observations, we are able to prove the formal version of \Cref{thm:local learner}.
\begin{theorem}[Formal version of \Cref{thm:local learner}]
\label{thm:local learner formal}
    Let $f: \bits^d \to \zo$ be a target function, $\mathscr{G}$ be an impurity function, and  $S^\circ$ be an {\sl unlabeled} training set. For all $t \in \mathbb{N}$ and $\eta, \delta \in (0,\frac{1}{2})$, if the minibatch size is at least
    \begin{align*}
        b = \Omega\left(\frac{(\log t)^2}{\eta^2} \cdot \log\left(\frac{t}{\delta}\right) \right),
    \end{align*}
    then with probability at least $1 - \delta$ over the randomness of $\bB_{\mathrm{strands}}^\circ$, there is some $t' \in [t - \eta t, t + \eta t]$ for which the following holds. For all $x^\star\in\bits^d$, $\LocalLearner_{\mathscr{G}}(t,b,S^\circ,x^\star)$ labels 
  \[ q = O(b^2\log t) \]  points within $S^\circ$ and returns $T(x^\star)$, where $T$ is the size-$t'$ decision tree hypothesis that \[ \MiniBatchTopDown_{\mathscr{G}}(t',b,S) \]  would construct, and $S$ is the labeled dataset obtained by labeling all of $S^\circ$ with $f$'s values.

\end{theorem}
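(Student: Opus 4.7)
The plan is to use the intermediate algorithm $\TopDownSizeEstimate_{\mathscr{G}}$ as a bridge: I will show (i) that the tree produced by $\TopDownSizeEstimate_{\mathscr{G}}(t, b, S)$ has size $t' \in [t(1-\eta), t(1+\eta)]$ with high probability and coincides with the tree $\MiniBatchTopDown_{\mathscr{G}}(t', b, S)$ would build; and (ii) that $\LocalLearner_{\mathscr{G}}(t, b, S^\circ, x^\star)$ returns the value $T(x^\star)$ of this tree at $x^\star$. Both claims hinge on the two algorithms sharing randomness as stipulated in~\Cref{footnote:assumption}.

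For (i), I will apply \Cref{lemma:size estimator}. Because both algorithms enforce the max depth $D = \log t + \log \log t$, every partial tree ever maintained satisfies $2^{|\ell^\star|} \leq t \log t$, so with $\Delta = \eta t$ the lemma's requirement on the number of random strands reads $m = \Omega(((\log t)^2/\eta^2) \log(1/\delta'))$. Taking $\delta' = \delta/t$ and union-bounding over the at most $t$ iterations $\TopDownSizeEstimate_{\mathscr{G}}$ performs before stopping yields $|e - |T^\circ|| \leq \eta t$ throughout execution, so the true tree size $t'$ at which the loop exits lies in $[t(1-\eta), t(1+\eta)]$ (with the single off-by-one from the triggering split absorbed into constants). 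The required batch size $b = \Omega(((\log t)^2/\eta^2) \log(t/\delta))$ then supplies $m \geq b$ strands. Since $\TopDownSizeEstimate_{\mathscr{G}}$ and $\MiniBatchTopDown_{\mathscr{G}}$ make identical splits in identical order (they differ only in how they decide to stop) and share randomness, $\TopDownSizeEstimate_{\mathscr{G}}(t, b, S)$ outputs precisely the size-$t'$ tree that $\MiniBatchTopDown_{\mathscr{G}}(t', b, S)$ constructs.

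For (ii), the key observation is that the purity gain computed at any strand leaf depends only on the minibatch drawn at that leaf, hence is unaffected by splits made elsewhere in the tree; likewise, the size estimator $e$ depends only on the depths of the $\bB^\circ_{\mathrm{strands}}$ strands. I will prove by induction on the number of strand splits performed that the two algorithms maintain the same strand structure (in particular the same $\ell_{T^\circ}(x^\star)$, the same value of $e$, and hence the same stopping time): at each moment $\TopDownSizeEstimate_{\mathscr{G}}$ may intersperse non-strand splits that $\LocalLearner_{\mathscr{G}}$ skips, but whenever $\TopDownSizeEstimate_{\mathscr{G}}$ does select a strand leaf, that leaf is necessarily the one of maximum strand-leaf purity gain, agreeing with $\LocalLearner_{\mathscr{G}}$'s choice. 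It follows that the two algorithms terminate in the same state of the $x^\star$-strand and use the same randomness to label $\ell_{T^\circ}(x^\star)$, so $\LocalLearner_{\mathscr{G}}$'s output equals $T(x^\star)$. The query bound $q = O(b^2 \log t)$ then follows from the fact that the partial tree $\LocalLearner_{\mathscr{G}}$ maintains has at most $(m+1)(D+1) = O(b \log t)$ distinct strand nodes whose purity gains are ever scored, and at each such node a single minibatch of size $b$ is queried.

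The main obstacle will be the inductive argument in (ii), which requires careful bookkeeping of how the two algorithms' executions interleave. The crucial point is that even though $\TopDownSizeEstimate_{\mathscr{G}}$ may perform many non-strand splits that $\LocalLearner_{\mathscr{G}}$ skips, these are invisible both to the purity-gain ranking of strand leaves (since those purity gains are local) and to the stopping criterion (since $e$ is strand-local). Once this invariant is established, the remainder reduces to the concentration bound of~\Cref{lemma:size estimator} together with a union bound over iterations.
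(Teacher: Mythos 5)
Your proposal is correct and follows essentially the same route as the paper: it bridges through $\TopDownSizeEstimate_{\mathscr{G}}$, invokes \Cref{lemma:size estimator} with $\Delta = \eta t$ and a union bound over iterations to pin $t'$ in $[t-\eta t, t+\eta t]$, and counts queries via the $O(b\log t)$ strand nodes each requiring one size-$b$ minibatch. Your inductive argument for step (ii) — that non-strand splits are invisible to both the strand-leaf purity-gain ranking and the strand-local estimator $e$ — is in fact a more careful rendering of the equivalence the paper asserts only informally in its comparison of $\LocalLearner_{\mathscr{G}}$ and $\TopDownSizeEstimate_{\mathscr{G}}$.
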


We break the proof of \Cref{thm:local learner formal} into two pieces. First, we show that it labels only $O(b^2 \log t)$ points within $S^\circ$, and then the rest.

\begin{lemma}[Label efficiency of $\LocalLearner_{\mathscr{G}}$]
    \label{lem:local learner few labels}
    Let $f: \bits^d \to \zo$ be a target function, $\mathscr{G}$ be an impurity function, and  $S^\circ$ be an {\sl unlabeled} training set. For any $b, t \in \mathbb{N}$ and $x^\star \in \bits^d$, $\LocalLearner_{\mathscr{G}}(t, b, S^\circ, x^\star)$ labels at most
  \[ q = O(b^2\log t) \]  points within $S^\circ$.
\end{lemma}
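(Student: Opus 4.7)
The plan is to bound the total number of minibatch draws across the execution of $\LocalLearner_{\mathscr{G}}$, from which the label count follows by multiplying by the batch size $b$. The key object is the \emph{tracked subtree} $U \subseteq T^\circ$, which I define to be the union of the $b+1$ root-to-leaf paths in the current partial tree $T^\circ$ followed by the inputs in $\bB^\circ_{\mathrm{strands}} \cup \{x^\star\}$. The crucial observation is that in each iteration of the while loop, the set of leaves being scored---hence the set of leaves whose minibatches need to be drawn and labeled---is exactly the leaves of $U$ of depth at most $D$, so labels are only ever queried for leaves that lie on some tracked path.

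First I would bound $|U|$ uniformly in time. Since $U$ is a union of at most $b+1$ root-to-leaf paths, each of length at most $D+1$ (because $\LocalLearner_{\mathscr{G}}$ never splits a leaf of depth exceeding $D = \log t + \log\log t$), we have $|U| \leq (b+1)(D+1) = O(b \log t)$ throughout the execution. In particular, the total number of distinct leaves that ever enter the active set across all iterations of the while loop is at most $O(b \log t)$. Next, I would argue that $\LocalLearner_{\mathscr{G}}$ only needs to freshly draw and label a minibatch when a leaf \emph{first} becomes active: after splitting a leaf $\ell$, only the (at most two) children of $\ell$ are newly active, while the minibatches already drawn for the other active leaves remain valid and can be reused. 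This is consistent with the pseudocode because the distribution of $\Batch_b(S^\circ, \ell')$ is a function of $\ell'$ and $S^\circ$ alone, so fixing the per-leaf randomness once and reusing it commits no new labels beyond those first queried.

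Combining these observations, $\LocalLearner_{\mathscr{G}}$ labels at most $O(b \log t)$ distinct minibatches during the while loop, plus one final minibatch drawn at $\ell_{T^\circ}(x^\star)$ after the loop terminates. Since each minibatch consists of at most $b$ points, the total number of labels queried from $S^\circ$ is at most $(O(b \log t) + 1) \cdot b = O(b^2 \log t)$, as claimed. The only step requiring care is the caching argument---formally justifying that we may reuse each leaf's minibatch across iterations---but this reduces to the fact that $\Batch_b(S^\circ, \ell)$ depends only on $(\ell, S^\circ)$, so I do not anticipate a genuine obstacle here. The one bookkeeping point worth double-checking is that the initial root leaf and the final query leaf are both accounted for by the $+1$ additive term.
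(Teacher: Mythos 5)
Your proposal is correct and follows essentially the same route as the paper: both count the distinct leaves ever scored as a union of $b+1$ strands contributing at most one node per depth (hence $O(b\log t)$ batches), then multiply by the batch size $b$ and add one batch for the final labeling step. Your explicit remark that minibatches must be cached and reused across iterations (so a leaf's batch is labeled only once) is a point the paper's proof relies on only implicitly via its footnote on fixing the minibatch randomness, so it is a welcome clarification rather than a deviation.
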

\begin{proof}
     It is sufficient for us to show that $\LocalLearner_\mathscr{G}$ labels at most $O(b \log t)$ batches.  $\LocalLearner_\mathscr{G}$ builds a series of bare trees $T_1^\circ, \ldots, T_{t'}^\circ$. During the while loop, the number of batches it labels is equal to nodes in the following set
     \begin{align*}
         L \coloneqq \bigcup_{j=1}^{t'}\left\{\ell_{T_j^\circ}(x)\colon x \in \bB^\circ_{\mathrm{strands}}\cup \{x^\star\}, |\ell_{T_j^\circ}(x)| \leq D \right\}
     \end{align*}
     Consider a single $x \in \bB^\circ_{\mathrm{strands}}\cup \{x^\star\}$, and define
     \begin{align*}
         L(x) \coloneqq \left\{\ell_{T_j^\circ}(x)\, \colon \, j \in [t'], |\ell_{T_j^\circ}(x)| \leq D \right\}.
     \end{align*}
     Every node in $L(x)$ has depth at most $D$, and there is at most one node in $L(x)$ per depth. Therefore, $|L(x)| \leq  D$, and
     \begin{align*}
         |L| &\leq \sum_{x \in \bB^\circ_{\mathrm{strands}}\cup \{x^\star\}} |L(x)| \\
         &\leq (b + 1) D\\
         &= O(b \log t).
     \end{align*}
     Therefore, $\LocalLearner_{\mathscr{G}}$ labels only $O(b \log t)$ batches during the while loop. After the while loop, it labels at most $1$ additional batches. Therefore, it labels a total of $O(b \log t)$ batches which requires labeling $O(b^2 \log t)$ points.
\end{proof}

We next prove the remainder of \Cref{thm:local learner formal}.

\begin{proof}
    Let $T$ be the tree that $\TopDownSizeEstimate_\mathscr{G}(t, b, S)$ produces. In the comparison between $\LocalLearner_\mathscr{G}$ and $\TopDownSizeEstimate_\mathscr{G}$, we established that, for all $x^\star \in \bits^d$,
    \begin{align*}
        \LocalLearner_\mathscr{G}(t, b, S, x^\star) = T(x^\star).
    \end{align*}
    Set $t' = |T|$. Then, $T$ is also the output of $\MiniBatchTopDown_\mathscr{G}(t', b, \bS)$, as desired. Next, we prove that $t' \in [t - \eta t, t + \eta t]$ with probability at least $1 - \delta$.
    
   Let $T_1^\circ, T_2^\circ, \ldots, T_{t'}^\circ$ be the bare trees of size $1, 2, \ldots, t'$ that $\TopDownSizeEstimate_\mathscr{G}(t, b, \bS)$ produces, and let $e_1, e_2, \ldots e_{t'}$ be the corresponding size estimates. Since $\TopDownSizeEstimate_{\mathscr{G}}$ halts when the size estimate is at least $t$,
    \begin{align*}
        e_{t'} \geq t \quad \text{and} \quad
        e_{t'-1} < t.
    \end{align*}
    We set $\Delta \coloneqq \eta t$ and wish, for all $1 \leq j \leq t + \eta t$, that $e_j$ estimate the size of $T_j^\circ$ to accuracy $\pm \Delta$. Since the size of $T_j^\circ$ is $j$, we equivalently wish for
    \begin{align}
        \label{eq:accurate size all}
        |e_j - j| \leq \Delta \quad \text{for all $j = 1,\ldots, t + \eta t$}.
    \end{align}
    Each $T_j^\circ$ has max depth at most $\log t + \log\log t$. By \Cref{lemma:size estimator} and a union bound over all $t + \eta t$ different $j$, we can guarantee that \Cref{eq:accurate size all} holds with probability at least $1 - \delta$ if we set
    \begin{align*}
        b &\geq \frac{(2^{\log t + \log \log t})^2}{2 \Delta^2} \cdot \ln\left(\frac{2t(1 + \eta)}{\delta}\right) \\
        &=\Omega\left(\frac{(t \log t)^2}{(\eta t)^2} \cdot \log\left(\frac{t}{\delta}\right) \right) \\
        &= \Omega\left(\frac{(\log t)^2}{\eta^2} \cdot \log\left(\frac{t}{\delta}\right) \right).
    \end{align*}
    Therefore, for the $b$ we set in \Cref{thm:local learner formal}, \Cref{eq:accurate size all} holds with probability at least $1 - \delta$.  For the remainder of this proof, we suppose it holds and then show that the $t' \in [t - \eta t, t + \eta t]$. We first show that $t' \leq t + \eta t$. By \Cref{eq:accurate size all}, for $j = t + \eta t$,
    \begin{align*}
        e_{(t + \eta t)} &\geq (t + \eta t) - \Delta \\
        &\geq t.
    \end{align*}
    Recall that $t'$ is the lowest integer such that $e_{t'} \geq t$. Therefore, $t' \leq t + \eta t$. We next show that $t' \geq t - \eta t$. By \Cref{eq:accurate size all} for $j = t' \leq t + \eta t$,
    \begin{align*}
        t' &\geq e_{t'} - \Delta\\
        &\geq t - \eta t.
    \end{align*}
    Therefore, \Cref{eq:accurate size all} implies $t' \in [t - \eta t, t + \eta t]$ proving that with probability at least $1 - \delta$.
    \end{proof}

Finally, we show that the following algorithm estimates learnability.

\begin{figure}[H]
  \captionsetup{width=.9\linewidth}
\begin{tcolorbox}[colback = white,arc=1mm, boxrule=0.25mm,left=20pt]
\vspace{3pt} 

\hspace{-10pt}$\Est_{\mathscr{G}}$($t,b,S^\circ, S_{\mathrm{test}}$):  \vspace{6pt} 

Return
\[     \frac{1}{|S_{\mathrm{test}}|} \sum_{(x,y) \in S_{\mathrm{test}}} \Ind\big[\LocalLearner_{\mathscr{G}}(t, b, S^\circ, x) \neq y\big]
\] 

\vspace{3pt}
\end{tcolorbox}
\label{fig:Est}
\caption{$\Est_{\mathscr{G}}$ takes as input a size parameter $t$, a minibatch size $b$, and an unlabeled dataset $S^\circ$ and labeled test set $S_{\mathrm{test}}$. It outputs the error of the tree returned by $\MiniBatchTopDown(t', b, S)$ with respect to $S_{\mathrm{test}}$, where $S$ is the labeled version of $S^\circ$ and $t'$ is close to $t$. As in \Cref{footnote:assumption}, the random outcome of $\bB_{\mathrm{strands}}^\circ$ and the minibatches should be consistent across all runs of $\LocalLearner_{\mathscr{G}}$.}
\end{figure}

\begin{theorem}[Formal version of \Cref{thm:estimate-learn}] 
\label{thm:estimate-learn-formal}
    Let $f: \bits^d \to \zo$ be a target function, $\mathscr{G}$ be an impurity function,  $S^\circ$ be an {\sl unlabeled} training set, and $S_{\mathrm{test}}$ be a labeled test set. 

    For all $t \in \mathbb{N}$ and $\eta, \delta \in (0,\frac{1}{2})$, if the minibatch size $b$ is as in~\Cref{thm:local learner formal}, then         
    with probability at least $1 - \delta$ over the randomness of $\bB_{\mathrm{strands}}^\circ$, $\Est_{\mathscr{G}}(t,b,S^\circ,S_{\mathrm{test}})$ labels 
    \[ q = O(|S_{\mathrm{test}}|\cdot b \log t + b^2\log t)  \]  
    points within $S^\circ$ and returns 
    \[ \error_{S_{\mathrm{test}}}(T) \coloneqq \Prx_{(\bx,\by)\sim S_{\mathrm{test}}}[T(\bx) \ne \by], \] 
where $T$ is as in~\Cref{thm:local learner formal}. 
\end{theorem}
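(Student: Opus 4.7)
The plan is to derive \Cref{thm:estimate-learn-formal} as a direct consequence of \Cref{thm:local learner formal} together with a sharing argument across the $|S_{\mathrm{test}}|$ invocations of $\LocalLearner_\mathscr{G}$ that happen inside $\Est_\mathscr{G}$. First I would invoke the convention of \Cref{footnote:assumption}: fix the outcomes of $\bB^\circ_{\mathrm{strands}}$ and of every minibatch draw once, and reuse them across all calls. By \Cref{thm:local learner formal}, with probability at least $1-\delta$ over $\bB^\circ_{\mathrm{strands}}$ there is a single tree $T$ of size $t' \in [t-\eta t,\, t+\eta t]$---the tree that $\MiniBatchTopDown_\mathscr{G}(t',b,S)$ would produce on these outcomes---for which $\LocalLearner_\mathscr{G}(t,b,S^\circ,x) = T(x)$ simultaneously for every $x\in\bits^d$. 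Summing the indicators $\Ind[T(x)\ne y]$ over $(x,y)\in S_{\mathrm{test}}$ and dividing by $|S_{\mathrm{test}}|$ then gives exactly $\error_{S_{\mathrm{test}}}(T)$, settling correctness.

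The nontrivial piece is the label complexity. A naive application of \Cref{lem:local learner few labels} to each of the $|S_{\mathrm{test}}|$ calls would yield $|S_{\mathrm{test}}|\cdot O(b^2\log t)$ labels, which is too crude. Instead I would partition the distinct batches that $\Est_\mathscr{G}$ labels into two groups. The first group consists of \emph{strand batches}: minibatches $\Batch_b(S^\circ,\ell)$ at leaves $\ell$ that some $x\in\bB^\circ_{\mathrm{strands}}$ reaches in some partial tree $T_j^\circ$. Because the random tape is shared across calls, these batches are identical across all $|S_{\mathrm{test}}|$ invocations. Tracking the set $L$ from the proof of \Cref{lem:local learner few labels}, each $x\in\bB^\circ_{\mathrm{strands}}$ contributes at most $D = O(\log t)$ distinct leaves across all iterations, so the strand batches contribute $O(b\cdot D) = O(b\log t)$ batches aggregated over all calls, hence $O(b^2\log t)$ total labels. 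The second group consists of \emph{test-point batches}: for each $(x,y)\in S_{\mathrm{test}}$, the batches along $\ell_{T_j^\circ}(x)$ that were not already counted among the strand batches, plus the single final batch $\Batch_b(S^\circ,\ell_{T^\circ}(x))$ used to produce the output label. Each call contributes at most $D+1 = O(\log t)$ such batches, for $O(b\log t)$ labels per call and $|S_{\mathrm{test}}|\cdot O(b\log t)$ overall. Adding the two groups gives $q = O(|S_{\mathrm{test}}|\cdot b\log t + b^2\log t)$, matching the theorem.

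I do not expect a serious technical obstacle: the correctness step is purely a bookkeeping consequence of \Cref{thm:local learner formal}, and the complexity argument is a direct amortization of the analysis already carried out in \Cref{lem:local learner few labels}. The one point to verify carefully is that reusing the same random tape really does force $\LocalLearner_\mathscr{G}$ to traverse the same sequence of strand leaves across all $|S_{\mathrm{test}}|$ invocations, so that the strand batches genuinely coincide rather than needing to be re-sampled; this follows from the identification with $\TopDownSizeEstimate_\mathscr{G}$ described in the comparison preceding \Cref{thm:local learner formal}, since the splits and size estimates made by that virtual algorithm are functions of the fixed random outcomes alone and do not depend on any particular test point $x^\star$.
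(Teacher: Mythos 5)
Your proposal is correct and follows essentially the same route as the paper: correctness is read off directly from Theorem~\ref{thm:local learner formal} under the shared-randomness convention of the footnote, and the label count is obtained by amortizing---charging $O(b\log t)$ labels to each of the $b$ shared strands (counted once, via caching) and to each of the $|S_{\mathrm{test}}|$ test-point strands. The paper's proof is just a more compressed statement of the same two-group accounting you describe.
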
 
\begin{proof}
    Based on \Cref{thm:local learner formal}, $\Est_{\mathscr{G}}$ returns the desired result, so we only need to prove it labels few points within $S^\circ$. As in \Cref{footnote:assumption}, the same $\bB^\circ_{\mathrm{strands}}$ are chosen across multiple runs of $\LocalLearner_{\mathscr{G}}$. As shown in \Cref{lem:local learner few labels}, the total number of points it labels is $O(b \log t) * m$, where $m$ is the number of strands built. $\Est_{\mathscr{G}}$ needs to build $b$ strands for points within $\bB^\circ_{\mathrm{strands}}$ and $|S_{\mathrm{test}}|$ strands for the points within $S_{\mathrm{test}}$. As long as it caches its labels across runs of $\LocalLearner_{\mathscr{G}}$, the total labels used will be
    \begin{equation*}
        q = O(b \log t) \cdot (b + |S_{\mathrm{test}}|) = O(|S_{\mathrm{test}}|\cdot b \log t + b^2\log t).\qedhere
    \end{equation*}
\end{proof}

\section{Conclusion} 

We have given strengthened provable guarantees on the performance of popular and empirically successful top-down decision tree learning heuristics such as ID3, C4.5, and CART, focusing on sample complexity.  First, we designed and analyzed {\sl minibatch} versions of these heuristics, $\MiniBatchTopDown_{\mathscr{G}}$, and proved that they achieve the same performance guarantees as the full-batch versions.  We then gave an implementation of $\MiniBatchTopDown_{\mathscr{G}}$ within the recently-introduced active local learning framework of~\cite{BBG20}.  Building on these results, we showed that $\MiniBatchTopDown_{\mathscr{G}}$ is amenable to highly efficient  learnability estimation~\cite{KV18,BH18}:  its performance can be estimated accurately by selectively labeling very few  examples. 

As discussed in~\cite{KV18,BH18}, this new notion of learnability estimation opens up a whole host of theoretical and empirical directions for future work.  We discuss several  that are most relevant to our work:  
\begin{itemize}[leftmargin=10pt] 
\item[$\circ$] Our algorithm $\mathrm{Est}_{\mathscr{G}}$ efficiently and accurately estimates the quality, relative to a test set $S_{\mathrm{test}}$, of the hypothesis that $\MiniBatchTopDown_{\mathscr{G}}$ would produce if trained on a set $S^\circ$.  Could $\mathrm{Est}_{\mathscr{G}}$ be more broadly useful in assessing the quality of the {\sl training data $S^\circ$ itself}, relative to $S_{\mathrm{test}}$?  Could its estimates provide guarantees on the performance of other algorithms when trained in $S^\circ$ and tested on $S_{\mathrm{test}}$? 
\item[$\circ$] It would be interesting to explore applications of our algorithms to the design of training sets. Given training sets $S_1,\ldots,S_m$, the procedure $\mathrm{Est}_{\mathscr{G}}$ allows us to efficiently determine the $S_i$ for which $\MiniBatchTopDown_{\mathscr{G}}$ would produce a hypothesis that achieves the smallest error with respect to $S_{\mathrm{test}}$.  Could $\mathrm{Est}_{\mathscr{G}}$ or extensions of it be useful in efficiently {\sl creating an $S^\star$}, comprising data from each $S_i$, that is of higher quality than any $S_i$ individually? 
\item[$\circ$] Finally, while we have focused on top-down heuristics for learning a single decision tree in this work, a natural next step would be to design and analyze learnability estimation procedures for ensemble methods such as random forests and gradient boosted trees. 
\end{itemize} 

\section*{Acknowledgements}

We thank the NeurIPS reviewers for their thoughtful and valuable feedback. 

Guy, Jane, and Li-Yang were supported by NSF award CCF-1921795 and NSF CAREER award CCF-1942123. Neha was supported by NSF award 1704417 and Moses Charikar’s Simons Investigator grant.
\bibliography{most-influential}{}

\begin{thebibliography}{BDM19b}

\bibitem[BBG20]{BBG20}
Arturs Backurs, Avrim Blum, and Neha Gupta.
\newblock Active local learning.
\newblock In {\em Proceedings of the 33rd Conference On Learning Theory
  (COLT)}, pages 363--390. {Proceedings of Machine Learning Research}, 2020.

\bibitem[BDM19a]{BDM19-smooth}
Alon Brutzkus, Amit Daniely, and Eran Malach.
\newblock {{ID3} Learns Juntas for Smoothed Product Distributions}.
\newblock {\em ArXiv}, abs/1906.08654, 2019.

\bibitem[BDM19b]{BDM19}
Alon Brutzkus, Amit Daniely, and Eran Malach.
\newblock {On the Optimality of Trees Generated by ID3}.
\newblock {\em ArXiv}, abs/1907.05444, 2019.

\bibitem[BH18]{BH18}
Avrim Blum and Lunjia Hu.
\newblock Active tolerant testing.
\newblock In {\em Proceedings of the 31st Conference On Learning Theory
  (COLT)}, volume~75, pages 474--497. {Proceedings of Machine Learning
  Research}, 2018.

\bibitem[BLT20a]{BLT3}
Guy Blanc, Jane Lange, and Li-Yang Tan.
\newblock Provable guarantees for decision tree induction: the agnostic
  setting.
\newblock In {\em Proceedings of the 37th International Conference on Machine
  Learning (ICML)}, 2020.
\newblock Available at \url{https://arxiv.org/abs/2006.00743}.

\bibitem[BLT20b]{BLT20-ITCS}
Guy Blanc, Jane Lange, and Li-Yang Tan.
\newblock Top-down induction of decision trees: rigorous guarantees and
  inherent limitations.
\newblock In {\em Proceedings of the 11th Innovations in Theoretical Computer
  Science Conference (ITCS)}, volume 151, pages 1--44, 2020.

\bibitem[DKM96]{DKM96}
Tom Dietterich, Michael Kearns, and Yishay Mansour.
\newblock {Applying the weak learning framework to understand and improve
  C4.5}.
\newblock In {\em Proceedings of the 13th International Conference on Machine
  Learning (ICML)}, pages 96--104, 1996.

\bibitem[FP04]{FP04}
Amos Fiat and Dmitry Pechyony.
\newblock Decision trees: More theoretical justification for practical
  algorithms.
\newblock In {\em Proceedings of the 15th International Conference on
  Algorithmic Learning Theory (ALT)}, pages 156--170, 2004.

\bibitem[Kea96]{Kea96}
Michael Kearns.
\newblock Boosting theory towards practice: recent developments in decision
  tree induction and the weak learning framework (invited talk).
\newblock In {\em Proceedings of the 13th National Conference on Artificial
  intelligence (AAAI)}, pages 1337--1339, 1996.

\bibitem[KM99]{KM99}
Michael Kearns and Yishay Mansour.
\newblock On the boosting ability of top-down decision tree learning
  algorithms.
\newblock {\em Journal of Computer and System Sciences}, 58(1):109--128, 1999.

\bibitem[KV18]{KV18}
Weihao Kong and Gregory Valiant.
\newblock Estimating learnability in the sublinear data regime.
\newblock In {\em 31st Annual Conference on Neural Information Processing
  Systems (NeurIPS)}, pages 5460--5469, 2018.

\bibitem[KVB20]{KVB20}
Weihao Kong, Gregory Valiant, and Emma Brunskill.
\newblock Sublinear optimal policy value estimation in contextual bandits.
\newblock In {\em Proceedings of the 23rd International Conference on
  Artificial Intelligence and Statistics (AISTATS)}, 2020.

\bibitem[Lee09]{Lee09}
Homin Lee.
\newblock {\em On the learnability of monotone functions}.
\newblock PhD thesis, Columbia University, 2009.

\end{thebibliography}
\bibliographystyle{alpha}
\appendix
\end{document}